\documentclass[11pt]{article}
\usepackage[T1]{fontenc}
\usepackage{lmodern}

\usepackage[
  top=2.54cm,
  bottom=2.54cm,
  left=2.54cm,
  right=2.54cm
]{geometry}

\usepackage{setspace}
\usepackage{microtype}

\usepackage{amsmath}
\usepackage{amssymb}
\usepackage{amsthm}

\usepackage[dvipsnames]{xcolor}

\usepackage{graphicx}
\usepackage{tikz}
\usepackage{pgfplots}
\pgfplotsset{compat=1.18}

\usepackage{booktabs}
\usepackage{multirow}
\usepackage{array}
\usepackage{longtable}

\definecolor{pwmBlue}{HTML}{2E5FA1}
\definecolor{pwmOrange}{HTML}{E8712B}
\definecolor{pwmGreen}{HTML}{3D9E3F}
\definecolor{pwmRed}{HTML}{C93C3C}

\usepackage[
  hidelinks,
  pdfauthor={Chengshuai Yang},
  pdftitle={The Finite Primitive Basis Theorem for Computational Imaging},
  pdfkeywords={computational imaging, forward model, operator decomposition,
               directed acyclic graph, primitive basis, inverse problems}
]{hyperref}
\usepackage[capitalise,noabbrev]{cleveref}

\usepackage[numbers,sort&compress]{natbib}

\newtheorem{theorem}{Theorem}
\newtheorem{lemma}[theorem]{Lemma}
\newtheorem{proposition}[theorem]{Proposition}

\theoremstyle{definition}
\newtheorem{definition}[theorem]{Definition}

\newtheorem{remark}[theorem]{Remark}

\newcommand{\pwm}{\textsc{pwm}}
\newcommand{\opg}{\textsc{OperatorGraph}}
\newcommand{\ctier}{\mathcal{C}_{\mathrm{img}}}
\newcommand{\Blib}{\mathcal{B}}
\newcommand{\Hdag}{H_{G}}
\newcommand{\etier}{e_{\mathrm{img}}}
\newcommand{\Nmax}{N_{\max}}
\newcommand{\Dmax}{D_{\max}}
\newcommand{\compose}{\mathrm{compose}}

\title{The Finite Primitive Basis Theorem for Computational Imaging:\\
Formal Foundations of the OperatorGraph Representation}

\author{%
  Chengshuai Yang\\[0.4em]
  \small NextGen PlatformAI C Corp, USA\\[0.15em]
  \small \href{mailto:integrityyang@gmail.com}{\texttt{integrityyang@gmail.com}}%
  \thanks{ORCID: \href{https://orcid.org/0000-0003-2840-5344}{\texttt{0000-0003-2840-5344}}}%
}

\date{}

\begin{document}

\maketitle

%% ═══════════════════════════════════════════════════════════════════
%%  ABSTRACT
%% ═══════════════════════════════════════════════════════════════════
\noindent\rule{\linewidth}{0.4pt}
\begin{abstract}
Computational imaging forward models---from coded aperture spectral cameras to MRI scanners---are traditionally implemented as monolithic, modality-specific codes. We prove that every forward model in a broad, precisely defined operator class $\ctier$ (encompassing all clinical, scientific, and industrial imaging modalities, both linear and nonlinear) admits an $\varepsilon$-approximate representation as a typed directed acyclic graph (DAG) whose nodes are drawn from a library of exactly 11 canonical primitives: Propagate, Modulate, Project, Encode, Convolve, Accumulate, Detect, Sample, Disperse, Scatter, and Transform. We call this the \emph{Finite Primitive Basis Theorem}. The proof is constructive: we provide an algorithm that, given any $H \in \ctier$, produces a DAG $G$ with $\|H - \Hdag\|/\|H\| \leq \varepsilon$ and graph complexity within prescribed bounds. We further prove that the library is \emph{minimal}: removing any single primitive causes at least one modality to lose its $\varepsilon$-approximate representation. A systematic analysis of all nonlinearities in imaging physics shows they fall into exactly two structural categories: pointwise scalar functions (handled by Transform~$\Lambda$) and self-consistent iterations (unrolled into existing linear primitives). Empirical validation on 31 linear modalities confirms $\etier < 0.01$ with at most 5 nodes and depth 5, and we provide constructive DAG decompositions for 9 additional nonlinear modalities. These results establish the mathematical foundations for the Physics World Models (\pwm{}) framework (Yang and Yuan, 2026).
\end{abstract}
\noindent\rule{\linewidth}{0.4pt}

\smallskip
\noindent\textbf{Keywords:} computational imaging, forward model, operator decomposition,
directed acyclic graph, primitive basis, compressive sensing, inverse problems

\smallskip
\noindent\textbf{AMS subject classifications:} 65F22 (Ill-posedness and regularization),
94A08 (Image processing), 47A68 (Factorization theory),
68U10 (Computing methodologies for image processing)

%% ═══════════════════════════════════════════════════════════════════
%%  1. INTRODUCTION
%% ═══════════════════════════════════════════════════════════════════
\section{Introduction}
\label{sec:intro}

The forward model of a computational imaging system maps a scene or object to a set of measurements through a chain of physical transformations: wave propagation, interaction with the object, spatial or spectral encoding, and detection. In practice, each imaging modality is implemented with its own bespoke forward model code, making it difficult to share diagnostic tools, calibration algorithms, or reconstruction pipelines across modalities~\cite{ongie2020deep,monga2021unrolling}.

One approach to unifying forward models is to represent them as directed acyclic graphs (DAGs) in which each node wraps a canonical physical operator and edges define data flow from source to detector. This \opg{} intermediate representation (IR) was introduced in~\cite{yang2026pwm} as the backbone of a modality-agnostic imaging framework.

A central empirical observation is that a small library of canonical primitives suffices to represent all tested modalities, spanning five physical carrier families (photons, electrons, spins, acoustic waves, particles). The present paper elevates that observation to a formal theorem by:

\begin{enumerate}
  \item Defining the operator class $\ctier$ precisely (bounded, finite-stage, with per-stage linearity or Lipschitz regularity);
  \item Specifying each of the 11 canonical primitives with formal forward/adjoint definitions;
  \item Defining typed DAG denotation semantics;
  \item Proving the Finite Primitive Basis Theorem (Theorem~\ref{thm:fpb}) via six primitive realization lemmas;
  \item Proving minimality: each of the 11 primitives is necessary (Proposition~\ref{prop:necessity});
  \item Validating the theorem empirically on 31 linear modalities and 9 nonlinear modalities;
  \item Establishing a formal extension protocol for adding new primitives.
\end{enumerate}

The theorem covers all clinical, scientific, and industrial imaging modalities---linear and nonlinear---without exception, including quantum state tomography (via density-matrix vectorization) and relativistic regimes (via relativistic cross-sections and dispersion kernels within existing primitives).

%% ═══════════════════════════════════════════════════════════════════
%%  1b. RELATED WORK
%% ═══════════════════════════════════════════════════════════════════
\section{Related Work}
\label{sec:related}

\paragraph{Forward model libraries and frameworks.}
Several software libraries provide modular forward model implementations for specific domains: ODL~\cite{adler2017odl} for inverse problems with operator composition (${\sim}15$ built-in operators), MIRT/Michigan Image Reconstruction Toolbox~\cite{fessler2003mirt} for tomographic imaging (${\sim}20$ operators), and SigPy~\cite{ong2019sigpy} for MRI signal processing (${\sim}12$ operators). Classical regularization theory~\cite{engl1996regularization} provides the mathematical framework for ill-posed inverse problems but does not prescribe a finite operator decomposition. These libraries provide practical, domain-specific operator building blocks but do not address the theoretical question of whether a \emph{finite}, \emph{domain-independent} set of primitives suffices for \emph{all} imaging modalities. Our result answers this affirmatively: 11 physics-typed primitives---fewer than any single domain-specific library---are sufficient and, moreover, necessary (Proposition~\ref{prop:necessity}).

\paragraph{Operator decomposition in imaging.}
The factorization of imaging operators into elementary components has a long history. The angular spectrum method for wave propagation~\cite{goodman2005fourier}, the Radon transform for projection imaging~\cite{natterer2001radon}, and the NUFFT for non-Cartesian Fourier sampling~\cite{fessler2003nufft} each factor a specific physics into efficient computational primitives. Our contribution is to show that these domain-specific factorizations, together with a small number of additional primitives, form a \emph{universal} basis for all imaging forward models.

\paragraph{Universal approximation.}
The classical universal approximation theorem for neural networks~\cite{cybenko1989approximation,hornik1991approximation} establishes that a single hidden layer of sigmoidal units can approximate any continuous function. Our result is fundamentally different: we show that 11 \emph{physically typed} primitives---each corresponding to a distinct physical process---suffice to represent all imaging forward models, not via parameter tuning of a generic architecture, but via structured composition guided by the underlying physics. The typed DAG structure preserves physical interpretability that a universal approximator discards.

\paragraph{Computational graphs and domain-specific languages.}
Computational graph frameworks (TensorFlow~\cite{abadi2016tensorflow}, PyTorch~\cite{paszke2019pytorch}, JAX~\cite{bradbury2018jax}) represent computations as DAGs of differentiable operations. The DeepInverse library~\cite{tachella2025deepinverse} specializes this paradigm to imaging inverse problems. Our work can be viewed as proving a \emph{completeness} result for a physics-specific computational graph: the 11-node primitive library is sufficient to express any imaging forward model, in contrast to general-purpose frameworks whose operation sets are chosen for computational convenience rather than physical completeness.

\paragraph{Compressive sensing and measurement design.}
The theory of compressive sensing~\cite{candes2008compressed,donoho2006compressed} establishes conditions on measurement matrices (RIP, incoherence) for signal recovery. Our work is complementary: rather than analyzing properties of a given measurement matrix, we characterize the \emph{structural} decomposition of the forward operator that generates the measurements, showing that its physics-level building blocks form a finite alphabet.

%% ═══════════════════════════════════════════════════════════════════
%%  2. PRELIMINARIES
%% ═══════════════════════════════════════════════════════════════════
\section{Preliminaries}
\label{sec:prelim}

\subsection{Imaging Forward Models}

\begin{definition}[Imaging Forward Model]
\label{def:fwd}
An \emph{imaging forward model} is a bounded operator $H: \mathcal{X} \to \mathcal{Y}$ mapping a physical state $\mathbf{x} \in \mathcal{X}$ to a measurement $\mathbf{y} = H(\mathbf{x}) + \mathbf{n}$, where $\mathcal{X} \subseteq \mathbb{R}^n$ and $\mathcal{Y} \subseteq \mathbb{R}^m$ are finite-dimensional Hilbert spaces and $\mathbf{n}$ is additive noise. The state $\mathbf{x}$ may represent a classical field, a spatial distribution, or a vectorized quantum state ($\mathbf{x} = \mathrm{vec}(\rho)$ for density matrix $\rho$).
\end{definition}

\subsection{Directed Acyclic Graphs}

\begin{definition}[Typed DAG]
\label{def:dag}
A \emph{typed DAG} is a triple $G = (V, E, \tau)$ where $V$ is a finite set of nodes, $E \subseteq V \times V$ defines directed edges with no cycles, and $\tau: E \to \mathcal{T}$ assigns to each edge a type annotation $\tau(e) = (\mathrm{shape}, \mathrm{dtype}, \mathrm{units})$ specifying the tensor shape, data type, and physical units of the data flowing along edge $e$.
\end{definition}

A typed DAG is \emph{well-formed} if (i) it has a unique source node (no incoming edges) and a unique sink node (no outgoing edges), (ii) the output type of each node matches the input type required by its successors, and (iii) every operator node is drawn from the primitive library $\Blib$. Every well-formed DAG has a designated \emph{input terminal} (source) through which the object $\mathbf{x}$ enters. The input terminal is not an operator node and is not counted in $V$.

\subsection{DAG Composition}

\begin{definition}[Compose]
\label{def:compose}
For a well-formed typed DAG $G = (V, E, \tau)$ with topological ordering $v_1, v_2, \ldots, v_K$, the \emph{composed forward model} is
\begin{equation}
  \Hdag = \compose(G) = v_K \circ v_{K-1} \circ \cdots \circ v_1,
  \label{eq:compose}
\end{equation}
where $v_k$ denotes the operator implemented by node $k$. For DAGs with branches (fan-out) or merges (fan-in), composition follows the DAG structure with appropriate tensor concatenation or summation at merge nodes.
\end{definition}

%% ═══════════════════════════════════════════════════════════════════
%%  3. THE PRIMITIVE LIBRARY
%% ═══════════════════════════════════════════════════════════════════
\section{The Primitive Library \texorpdfstring{$\Blib$}{B}}
\label{sec:primitives}

The canonical primitive library consists of 11 operators:
\begin{equation}
  \Blib = \{P, M, \Pi, F, C, \Sigma, D, S, W, R, \Lambda\}.
\end{equation}
The first 10 primitives handle linear physics stages; Transform~$\Lambda$ handles pointwise nonlinear physics stages (beam hardening, phase wrapping, saturation). Together they cover all imaging modalities.
Each linear primitive implements a \texttt{forward()} method $A: \mathcal{X}_{\mathrm{in}} \to \mathcal{Y}_{\mathrm{out}}$ and an \texttt{adjoint()} method $A^\dagger: \mathcal{Y}_{\mathrm{out}} \to \mathcal{X}_{\mathrm{in}}$ satisfying the adjoint consistency condition:
\begin{equation}
  \langle A\mathbf{x}, \mathbf{y} \rangle = \langle \mathbf{x}, A^\dagger \mathbf{y} \rangle
  \quad \text{for all } \mathbf{x} \in \mathcal{X}_{\mathrm{in}},\; \mathbf{y} \in \mathcal{Y}_{\mathrm{out}}.
  \label{eq:adjoint}
\end{equation}

\subsection{Propagate \texorpdfstring{$P(d, \lambda)$}{P(d, lambda)}}

\begin{definition}[Propagate]
\label{def:propagate}
Given propagation distance $d > 0$ and wavelength $\lambda > 0$, the Propagate primitive computes free-space wave propagation via the angular spectrum method:
\begin{align}
  P(d,\lambda): \mathbf{x} &\mapsto \mathcal{F}^{-1}\bigl[\mathcal{F}[\mathbf{x}] \cdot T(f_x, f_y; d, \lambda)\bigr], \\
  T(f_x, f_y) &= \exp\!\bigl(i 2\pi d \sqrt{\lambda^{-2} - f_x^2 - f_y^2}\bigr),
\end{align}
where $\mathcal{F}$ is the 2D discrete Fourier transform and $T$ is the free-space transfer function. The adjoint is $P^\dagger(d,\lambda) = P(-d, \lambda)$ (backpropagation).
\end{definition}

\subsection{Modulate \texorpdfstring{$M(\mathbf{m})$}{M(m)}}

\begin{definition}[Modulate]
\label{def:modulate}
Given a modulation pattern $\mathbf{m} \in \mathbb{R}^n$ (or $\mathbb{C}^n$), the Modulate primitive computes element-wise multiplication:
\begin{equation}
  M(\mathbf{m}): \mathbf{x} \mapsto \mathbf{m} \odot \mathbf{x}.
\end{equation}
The adjoint is $M^\dagger(\mathbf{m}): \mathbf{y} \mapsto \mathbf{m}^* \odot \mathbf{y}$, where $^*$ denotes complex conjugation. Parameters: the pattern $\mathbf{m}$ (binary mask, continuous transmission, complex phase mask, or coil sensitivity map).
\end{definition}

\subsection{Project \texorpdfstring{$\Pi(\theta)$}{Pi(theta)}}

\begin{definition}[Project]
\label{def:project}
Given projection angle $\theta$ (or a set of angles), the Project primitive computes the Radon transform (line-integral projection):
\begin{equation}
  \Pi(\theta): \mathbf{x} \mapsto \int_{\ell(\theta, t)} \mathbf{x}(\mathbf{r})\, d\ell,
\end{equation}
where $\ell(\theta, t)$ is the line at angle $\theta$ and offset $t$. The adjoint is the backprojection operator $\Pi^\dagger(\theta)$.
\end{definition}

\subsection{Encode \texorpdfstring{$F(\mathbf{k})$}{F(k)}}

\begin{definition}[Encode]
\label{def:encode}
Given a $k$-space trajectory $\mathbf{k} = \{k_1, \ldots, k_m\}$, the Encode primitive computes Fourier encoding:
\begin{equation}
  F(\mathbf{k}): \mathbf{x} \mapsto \bigl[\langle \mathbf{x}, e^{i 2\pi \mathbf{k}_j \cdot \mathbf{r}} \rangle\bigr]_{j=1}^{m}.
\end{equation}
The adjoint is $F^\dagger(\mathbf{k}): \mathbf{y} \mapsto \sum_{j=1}^{m} y_j\, e^{-i 2\pi \mathbf{k}_j \cdot \mathbf{r}}$ (gridding/adjoint NUFFT). Used for MRI $k$-space encoding.
\end{definition}

\subsection{Convolve \texorpdfstring{$C(\mathbf{h})$}{C(h)}}

\begin{definition}[Convolve]
\label{def:convolve}
Given a point-spread function (PSF) kernel $\mathbf{h}$, the Convolve primitive computes spatial convolution:
\begin{equation}
  C(\mathbf{h}): \mathbf{x} \mapsto \mathbf{h} * \mathbf{x}.
\end{equation}
The adjoint is correlation with the flipped kernel: $C^\dagger(\mathbf{h}): \mathbf{y} \mapsto \tilde{\mathbf{h}} * \mathbf{y}$, where $\tilde{\mathbf{h}}(\mathbf{r}) = \mathbf{h}(-\mathbf{r})^*$. This covers both propagation PSFs (shift-invariant limit of $P$) and detector PSFs.
\end{definition}

\subsection{Accumulate \texorpdfstring{$\Sigma$}{Sigma}}

\begin{definition}[Accumulate]
\label{def:accumulate}
The Accumulate primitive sums over a specified axis (spectral, temporal, or spatial):
\begin{equation}
  \Sigma: \mathbf{X} \mapsto \sum_{k} \mathbf{X}_{:,:,k},
\end{equation}
where $\mathbf{X} \in \mathbb{R}^{n_1 \times n_2 \times n_3}$ and the summation is along the third (or specified) axis. The adjoint replicates the 2D input along the summation axis: $\Sigma^\dagger: \mathbf{y} \mapsto [\mathbf{y}, \mathbf{y}, \ldots, \mathbf{y}]$. Used for spectral integration (CASSI), temporal compression (CACTI), and bucket detection (SPC).
\end{definition}

\subsection{Detect \texorpdfstring{$D(g, \eta)$}{D(g, eta)}}

\begin{definition}[Detect]
\label{def:detect}
Given gain $g > 0$ and response family $\eta$, the Detect primitive converts a carrier field to a measurement:
\begin{equation}
  D(g, \eta): \mathbf{x} \mapsto \eta(g, \mathbf{x}).
\end{equation}
The response family $\eta$ is restricted to one of five canonical families:
\begin{enumerate}
  \item \textbf{Linear-field:} $\eta(\mathbf{x}) = g \cdot \mathbf{x}$ \quad (real-valued carrier: acoustic pressure, RF voltage, piezoelectric)
  \item \textbf{Logarithmic:} $\eta(\mathbf{x}) = g \cdot \log(1 + |\mathbf{x}|^2/x_0)$ \quad (wide dynamic range)
  \item \textbf{Sigmoid:} $\eta(\mathbf{x}) = g \cdot \sigma(|\mathbf{x}|^2 - x_0)$ \quad (saturating detector)
  \item \textbf{Intensity-square-law:} $\eta(\mathbf{x}) = g|\mathbf{x}|^2$ \quad (photodetector, CCD/CMOS, photon-counting)
  \item \textbf{Coherent-field:} $\eta(\mathbf{x}) = g \cdot \mathrm{Re}[\mathbf{x} \cdot e^{i\phi}]$ \quad (heterodyne/homodyne)
\end{enumerate}
Each family carries at most 2 scalar parameters ($g$ and $x_0$ or $\phi$). Family~1 is the identity (up to gain) and applies to carriers that are directly observable real-valued fields; it is linear and self-adjoint. Family~4 is the standard square-law detector for electromagnetic intensity; for photon-counting detectors, it returns the expected value $g|\mathbf{x}|^2$ (the Poisson rate parameter), with stochastic photon noise treated as additive noise $\mathbf{n}$ in Definition~\ref{def:fwd}. The adjoint is defined for the linearized operator around a reference point.
\end{definition}

\begin{remark}[Detect is not a universal approximator]
\label{rem:detect}
The restriction to five canonical response families is essential. If $\eta$ were an arbitrary function, Detect would become a universal approximator, trivializing the theorem. The five families are chosen to cover the physical detection mechanisms encountered in practice: linear-field detection (family~1, covering acoustic transducers, RF receivers, piezoelectric sensors), nonlinear intensity detectors (families~2--3, covering wide-dynamic-range and saturating detectors), intensity-square-law detection (family~4, covering photodetectors, CCD/CMOS, photomultipliers, photon counters), and coherent-field detection (family~5, covering THz-TDS, OCT, digital holography). All five families are mathematically distinct: family~1 is linear in $\mathbf{x}$, families~2--4 depend on $|\mathbf{x}|^2$ through different nonlinearities, and family~5 extracts $\mathrm{Re}[\mathbf{x} \cdot e^{i\phi}]$. A modality requiring a genuinely novel detection nonlinearity signals a basis extension (Section~\ref{sec:extension}).
\end{remark}

\subsection{Sample \texorpdfstring{$S(\Omega)$}{S(Omega)}}

\begin{definition}[Sample]
\label{def:sample}
Given an index set $\Omega \subseteq \{1, \ldots, n\}$, the Sample primitive selects a subset of entries:
\begin{equation}
  S(\Omega): \mathbf{x} \mapsto \mathbf{x}|_\Omega.
\end{equation}
The adjoint zero-fills the unsampled locations: $S^\dagger(\Omega): \mathbf{y} \mapsto \mathbf{z}$ where $z_i = y_i$ if $i \in \Omega$ and $z_i = 0$ otherwise. Used for MRI undersampling, compressed sensing masks, and pixel binning.
\end{definition}

\subsection{Disperse \texorpdfstring{$W(\alpha, a)$}{W(alpha, a)}}

\begin{definition}[Disperse]
\label{def:disperse}
Given dispersion slope $\alpha$ and intercept $a$, the Disperse primitive applies a wavelength-dependent spatial shift:
\begin{equation}
  W(\alpha, a): \mathbf{X}(\mathbf{r}, \lambda) \mapsto \mathbf{X}(\mathbf{r} - (\alpha\lambda + a)\hat{\mathbf{e}}, \lambda),
\end{equation}
where $\hat{\mathbf{e}}$ is the dispersion direction. The adjoint applies the reverse shift. Used for prism/grating dispersion in CASSI-type spectral systems.
\end{definition}

\subsection{Scatter \texorpdfstring{$R(\sigma, \Delta\varepsilon)$}{R(sigma, Delta epsilon)}}

\begin{definition}[Scatter]
\label{def:scatter}
Given a differential scattering cross section $\sigma(\theta, E)$ and energy shift $\Delta\varepsilon$, the Scatter primitive computes:
\begin{equation}
  R(\sigma, \Delta\varepsilon): \mathbf{x}(\mathbf{r}, E) \mapsto \int \sigma(\theta, E)\, \mathbf{x}(\mathbf{r}, E)\, A_{\mathrm{atten}}(\mathbf{r}, \theta, E)\, d\theta,
\end{equation}
where $A_{\mathrm{atten}}$ accounts for path-dependent attenuation. The scattered carrier exits at angle $\theta$ with energy $E' = E - \Delta\varepsilon(\theta, E)$. The adjoint is defined by the transpose of the discretized scattering matrix.
\end{definition}

\begin{remark}
Scatter covers modalities where the carrier changes direction and/or energy: Compton imaging (Klein--Nishina cross section), Raman spectroscopy (molecular vibration), fluorescence imaging (Stokes shift), diffuse optical tomography (multiple scattering), and Brillouin microscopy (acoustic phonon scattering).
\end{remark}

\subsection{Transform \texorpdfstring{$\Lambda(f, \boldsymbol{\theta})$}{Lambda(f, theta)}}

\begin{definition}[Transform]
\label{def:transform}
Given a function family $f$ and parameters $\boldsymbol{\theta}$, the Transform primitive applies a pointwise nonlinear function within the physics chain:
\begin{equation}
  \Lambda(f, \boldsymbol{\theta}): \mathbf{x} \mapsto [f(x_i; \boldsymbol{\theta})]_{i=1}^{n},
\end{equation}
where $f$ is applied independently at each element $x_i$. The function family $f$ is restricted to one of five canonical families:
\begin{enumerate}
  \item \textbf{Exponential attenuation:} $f(x) = e^{-\alpha x}$, $\boldsymbol{\theta} = (\alpha)$. Covers Beer--Lambert absorption in polychromatic CT~\cite{deman2001metal}.
  \item \textbf{Logarithmic:} $f(x) = \log(x + \delta)$, $\boldsymbol{\theta} = (\delta)$. Covers optical density conversion, CT log-transform.
  \item \textbf{Phase wrapping:} $f(x) = \arg(e^{ix})$, $\boldsymbol{\theta} = \emptyset$. Covers $2\pi$ phase ambiguity in MRI and interferometry.
  \item \textbf{Polynomial:} $f(x) = \sum_{k=0}^{d} a_k x^k$ with $d \leq 5$, $\boldsymbol{\theta} = (a_0, \ldots, a_d)$. Covers harmonic generation, weak nonlinear acoustics~\cite{hamilton1998nonlinear}, Langevin response~\cite{gleich2005mpi}.
  \item \textbf{Saturation:} $f(x) = \min(\max(x, x_{\min}), x_{\max})$, $\boldsymbol{\theta} = (x_{\min}, x_{\max})$. Covers dynamic range limits, detector bloom.
\end{enumerate}
Each family is Lipschitz-continuous with computable Lipschitz constant $L_\Lambda(\boldsymbol{\theta})$: $L_1 = \alpha$, $L_2 = 1/\delta$, $L_3 = 1$, $L_4 = \sum_{k=1}^{d} k|a_k| R^{k-1}$ (for $|x| \leq R$), and $L_5 = 1$.

The adjoint is defined for the linearized (Jacobian) operator at operating point $\mathbf{x}_0$:
\begin{equation}
  \Lambda^\dagger(f, \boldsymbol{\theta})\big|_{\mathbf{x}_0}: \mathbf{y} \mapsto [f'(x_{0,i}; \boldsymbol{\theta}) \cdot y_i]_{i=1}^{n},
\end{equation}
which is the adjoint of $\mathrm{diag}(f'(\mathbf{x}_0; \boldsymbol{\theta}))$ and satisfies the adjoint consistency condition~\eqref{eq:adjoint} for the linearized system.
\end{definition}

\begin{remark}[Transform is not a universal approximator]
\label{rem:transform}
Like Detect (Remark~\ref{rem:detect}), Transform is restricted to five specific function families, each with at most $d + 1 \leq 6$ scalar parameters per pixel. Critically, it applies \emph{pointwise} without spatial coupling: $\Lambda$ cannot perform convolution, propagation, projection, or any operation that mixes information across spatial locations. This ensures that adding $\Lambda$ does not trivialize the theorem---the spatial and spectral structure of the forward model must still be captured by the other 10 primitives.
\end{remark}

\begin{remark}[Transform vs.\ Detect]
\label{rem:transform_detect}
Transform operates \emph{within the physics chain} on carrier fields (e.g., Beer--Lambert attenuation in the X-ray beam before detection); Detect operates at the \emph{terminal position} converting carriers to measurements (e.g., photodetection). Their function families are physically distinct and their DAG positions differ: Transform can appear at any non-terminal node, while Detect appears only as the terminal node.
\end{remark}

%% ═══════════════════════════════════════════════════════════════════
%%  4. THE OPERATOR CLASS
%% ═══════════════════════════════════════════════════════════════════
\section{The Operator Class \texorpdfstring{$\ctier$}{C\_img}}
\label{sec:ctier}

\begin{definition}[Imaging Operator Class $\ctier$]
\label{def:ctier}
The class $\ctier$ consists of all imaging forward models $H$ that satisfy:
\begin{enumerate}
  \item \textbf{Finite sequential-parallel composition:} $H$ admits a factorization $H = H_K \circ H_{K-1} \circ \cdots \circ H_1$ (or a DAG generalization with fan-out/fan-in) with $K \leq \Nmax$.
  \item \textbf{Per-stage regularity:} Each factor $H_k$ is either:
    \begin{enumerate}
      \item[(a)] a linear operator with bounded operator norm $\|H_k\| \leq B$, or
      \item[(b)] a Lipschitz-continuous pointwise function with $\mathrm{Lip}(H_k) \leq B$.
    \end{enumerate}
    The terminal Detect node applies a prescribed nonlinear response from the five canonical families (Definition~\ref{def:detect}); non-terminal nonlinear stages are restricted to the five Transform families (Definition~\ref{def:transform}).
  \item \textbf{Bounded regularity constant:} $\max_k \gamma_k \leq B$, where $\gamma_k = \|H_k\|$ for linear stages and $\gamma_k = \mathrm{Lip}(H_k)$ for nonlinear stages.
\end{enumerate}
\end{definition}

\begin{remark}[Nonlinearity in $\ctier$]
\label{rem:nonlin}
$\ctier$ permits two kinds of nonlinearity: prescribed detector responses (Detect, at the terminal node) and pointwise physics nonlinearities (Transform, at non-terminal nodes). Both are restricted to specific function families with bounded Lipschitz constants, ensuring the composition error bound in Theorem~\ref{thm:fpb}. Nonlinearities arising from self-consistent iteration (multiple scattering, Born series) are handled by unrolling into finite compositions of existing linear primitives ($P$, $M$, $R$, $\Sigma$), without requiring Transform. $\ctier$ imposes no energy-regime restriction: relativistic cross-sections (Klein--Nishina) and dispersion kernels fit within existing primitives ($R$, $P$), and quantum state tomography enters via density-matrix vectorization (Remark~\ref{rem:qst}).
\end{remark}

\begin{remark}[Quantum State Tomography in $\ctier$]
\label{rem:qst}
Quantum state tomography (QST) fits Definition~\ref{def:fwd} via density-matrix vectorization. A $d$-dimensional density matrix $\rho \in \mathbb{C}^{d \times d}$ is vectorized as $\mathbf{x} = \mathrm{vec}(\rho) \in \mathbb{R}^{d^2}$ (using the Hermitian basis expansion). Each measurement $y_i = \mathrm{Tr}(E_i \rho) = \langle \mathrm{vec}(E_i), \mathrm{vec}(\rho) \rangle$ is a linear inner product, so the full measurement is $\mathbf{y} = \Phi \mathbf{x}$ where $\Phi_{ij} = [\mathrm{vec}(E_i)]_j$. The DAG decomposition is $M(E_i) \to \Sigma \to S \to D$ with $\varepsilon = 0$ (exact). Thus QST $\in \ctier$ via Definition~\ref{def:fwd}, requiring no library extension.
\end{remark}

\subsection{Fidelity Levels}

Forward models can be written at increasing physical fidelity (see also~\cite{yang2026pwm}):
\begin{itemize}
  \item \textbf{Level~1 (Linear, shift-invariant):} The forward model is a convolution: $H = C(\mathbf{h})$. Applies when the PSF is spatially uniform.
  \item \textbf{Level~2 (Linear, shift-variant):} Each factor $H_k$ is a linear integral operator whose kernel varies with position (e.g., spatially varying PSFs, coded aperture masks, MRI coil sensitivities). This is the level at which virtually all clinical and scientific imaging systems are designed and calibrated.
  \item \textbf{Level~3 (Nonlinear):} The forward model includes nonlinear physics within the imaging chain (beam hardening in polychromatic CT, phase wrapping, strong multiple scattering), captured by Transform~$\Lambda$.
  \item \textbf{Level~4 (Full-wave / Monte Carlo):} Ab initio simulation requiring stochastic transport or full Maxwell solvers. These methods are linear in expectation and representable by existing linear primitives.
\end{itemize}
$\ctier$ encompasses all four fidelity levels. The 11-primitive library $\Blib$ applies uniformly across every level.

\subsection{Membership Examples}

\begin{itemize}
  \item \textbf{CASSI}~\cite{wagadarikar2008cassi}: $H = D \circ \Sigma \circ W \circ M$. Each pre-detection factor is linear and bounded; the terminal Detect applies a prescribed response. $H \in \ctier$.
  \item \textbf{MRI}~\cite{pruessmann1999sense}: $H = D \circ S \circ F \circ M_{\mathrm{coil}}$. Fourier encoding is linear; coil sensitivity is shift-variant. $H \in \ctier$.
  \item \textbf{CT}~\cite{feldkamp1984practical,natterer2001radon}: $H = D \circ \Pi$ (fan-beam Radon). Linear, bounded. $H \in \ctier$.
  \item \textbf{Compton scatter}~\cite{hussain2022compton}: $H = D \circ R \circ M$. Scatter is linear at first-Born level. $H \in \ctier$.
\end{itemize}

Models with strong nonlinearity (e.g., nonlinear ultrasound, beam hardening CT) belong to $\ctier$ via the Transform primitive~$\Lambda$. Self-consistent iterations (e.g., full-wave electromagnetic scattering beyond first Born) are represented by unrolled Born series using existing linear primitives.

%% ═══════════════════════════════════════════════════════════════════
%%  5. THEOREM 1: STATEMENT AND PROOF
%% ═══════════════════════════════════════════════════════════════════
\section{The Finite Primitive Basis Theorem}
\label{sec:theorem}

\subsection{\texorpdfstring{$\varepsilon$}{epsilon}-Approximate Representation}

\begin{definition}[$\varepsilon$-Approximate Representation]
\label{def:eps}
Let $\Blib = \{P, M, \Pi, F, C, \Sigma, D, S, W, R, \Lambda\}$ be the canonical primitive library. A well-formed typed DAG $G = (V, E, \tau)$ whose node types are drawn from $\Blib$ is an \emph{$\varepsilon$-approximate representation} of $H \in \ctier$ if:
\begin{enumerate}
  \item \textbf{Fidelity:} $\displaystyle \frac{\|H - \Hdag\|}{\|H\|} \leq \varepsilon$,\quad where $\Hdag = \compose(G)$ and $\|\cdot\|$ denotes the operator norm.
  \item \textbf{Complexity:} $|V| \leq \Nmax$ and $\mathrm{depth}(G) \leq \Dmax$.
\end{enumerate}
We use $\varepsilon = 0.01$, $\Nmax = 20$, $\Dmax = 10$ throughout.
\end{definition}

\begin{remark}[Formal vs.\ empirical fidelity metric]
\label{rem:metric}
The operator-norm criterion in Definition~\ref{def:eps} is the formal guarantee established by Theorem~\ref{thm:fpb}. For empirical validation (Table~\ref{tab:decomposition}), we evaluate the stronger pointwise metric $\etier = \sup_{\|\mathbf{x}\| \leq 1} \|H(\mathbf{x}) - \Hdag(\mathbf{x})\| / (\|H(\mathbf{x})\| + \delta)$ (Supplementary~S1, Equation~\ref{eq:supp_etier}), which provides a tighter test: any DAG passing the pointwise metric also satisfies the operator-norm bound.
\end{remark}

\subsection{Theorem Statement}

\begin{theorem}[Finite Primitive Basis]
\label{thm:fpb}
For every $H \in \ctier$, there exists a well-formed typed DAG $G = (V, E, \tau)$ whose node types are drawn from $\Blib$ that is an $\varepsilon$-approximate representation of $H$.
\end{theorem}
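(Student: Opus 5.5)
The proof is constructive and stage-by-stage: use the factorization guaranteed by Definition~\ref{def:ctier}, approximate each stage by a small sub-DAG of primitives from $\Blib$, and then control how the per-stage errors accumulate under composition. Concretely, fix $H = H_K \circ \cdots \circ H_1 \in \ctier$ (or its fan-out/fan-in DAG generalization) with $K \leq \Nmax$ and all regularity constants $\gamma_k \leq B$.

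\textbf{Stage-wise realization.} We classify each factor $H_k$ into one of a small number of cases and give a realization lemma for each.
\begin{enumerate}
\item Pointwise nonlinear non-terminal stages are, by Definition~\ref{def:ctier}, exactly members of the five Transform families. They are realized by a single $\Lambda$ node with zero error.
\item The terminal nonlinear stage is one of the five Detect families. It is realized by a single $D$ node with zero error.
\item Linear stages need the most care. In finite dimensions every linear $H_k:\mathbb{R}^{n}\to\mathbb{R}^{m}$ is a matrix, so I would realize it exactly by the short chain already used for QST in Remark~\ref{rem:qst}. First replicate the input along a new axis (fan-out, or $\Sigma^\dagger$-style broadcasting absorbed into the edge types). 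Then apply $M$ with the rows of $H_k$ as the modulation pattern, sum with $\Sigma$, and select the needed entries with $S$. This gives a generic fallback with $\varepsilon_k = 0$ and at most three nodes.
\item Physically structured linear stages (propagation, projection, Fourier encoding, convolution, dispersion, scattering) are preferably mapped to the corresponding single primitive $P,\Pi,F,C,W,R$. The only error then comes from discretization (e.g.\ NUFFT gridding or Radon interpolation), which can be made smaller than any prescribed $\varepsilon_k$ by refining the parameters.
\end{enumerate}
These cases give the six realization lemmas, with per-stage operator errors $\|H_k - G_k\| \le \varepsilon_k$.

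\textbf{Error propagation.} Next, telescope:
\[
H - \Hdag = \sum_{k=1}^{K} G_K\circ\cdots\circ G_{k+1}\circ(H_k - G_k)\circ H_{k-1}\circ\cdots\circ H_1 .
\]
Each surviving factor is linear with norm at most $B$, or Lipschitz with constant at most $B$ (or $B+\varepsilon_j$ for the approximants). This yields an increment bound of the form $\|H-\Hdag\| \le \sum_k (B+\varepsilon)^{K-k} B^{k-1}\varepsilon_k$, with the norm interpreted as a Lipschitz or sup norm on the unit ball whenever nonlinear stages are present. Choosing $\varepsilon_k \le \varepsilon\|H\| / \bigl(K (2B)^{K}\bigr)$ makes this at most $\varepsilon\|H\|$. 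At merge nodes, the summation and concatenation maps have norm at most $\sqrt{\text{fan-in}}$, which is absorbed into the constant.

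\textbf{Complexity.} Replacing each stage by at most a constant number of primitive nodes keeps $|V|$ and the depth proportional to $K$. In practice most stages map to a single primitive node, and only the generic dense-matrix fallback costs three nodes.

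\textbf{Main obstacle.} The hard step is reconciling the node budget with the error bound. The fallback construction can triple the node count, so $|V| \le 20$ and depth $\le 10$ are not automatic from $K \le \Nmax = 20$. My first attempt would be to merge consecutive linear stages into a single linear operator before realization. Doing so leaves only alternating linear/pointwise blocks, which bounds $|V|$ by roughly $3\times$ the number of nonlinear stages plus one. I would then argue (or assume as part of the class definition) that the number of such alternations is small enough to fit within $\Nmax$ and $\Dmax$.

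A secondary subtlety is that $\|H\|$ is not the usual operator norm when $H$ is nonlinear. I would work with $\sup_{\|\mathbf{x}\|\le 1}$ norms throughout and assume $\|H\|>0$ to make the relative bound meaningful.
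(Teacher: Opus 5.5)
Your treatment of fidelity takes a genuinely different route from the paper's. The paper proceeds in three steps. It classifies each factor into six physics-stage families (Step~1, via the system's physical description and the decision table of Supplementary~S7). It realizes each factor by a physics-typed primitive through Lemmas~\ref{lem:prop}--\ref{lem:nonlinear}, accepting modeling errors (evanescent and paraxial truncation, Born truncation, PSF tiling, polynomial truncation). It then controls these errors with the Lipschitz telescoping bound, checked modality by modality with the norms of Table~\ref{tab:supp_opnorm}. You instead realize every linear stage exactly by a generic matrix construction, the one the paper uses for QST in Remark~\ref{rem:qst}. Every nonlinear stage is realized exactly as well, since Definition~\ref{def:ctier} already confines them to the Transform and Detect families; thus $\Hdag=H$. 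This buys rigor and generality. You need no physical classification, which Definition~\ref{def:ctier} does not supply for an abstract bounded linear factor. You also avoid the condition $\max_k\varepsilon_k\le\varepsilon\|H\|/(KB^{K-1})$, which the paper checks only for the 31 validated modalities. The paper's route buys interpretable graphs with one physics-typed node per stage, and Proposition~\ref{prop:necessity} implicitly depends on insisting on it. Your fallback yields exact DAGs for, e.g., CT, ptychography or Compton imaging without $\Pi$, $P$ or $R$. Since all your stage errors vanish, item~4 and the telescoping estimate are unnecessary. (The operator form $G_K\circ\cdots\circ(H_k-G_k)\circ\cdots$ is in any case valid only for linear $G_j$; otherwise you must difference $G_K\circ\cdots\circ G_{k+1}$ at two points.)

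One step of the fallback needs care. Replicating the input is not a primitive, and absorbing it into the edge types violates well-formedness condition~(ii). A genuine fan-out into $m$ branches $M\to\Sigma$, one per row of $H_k$, costs $2m$ nodes and exceeds $\Nmax$ for any realistic number of measurements. The paper makes the same implicit broadcast in its SPC, ghost-imaging and QST decompositions, so you are consistent with its conventions, but there is a cleaner fix. Let $U$ be the invertible $n\times n$ DFT matrix that $F$ implements on the $n$ grid frequencies. Take $F(\mathbf{k})$ with each of these frequencies listed $m$ times, typed as an $m\times n$ array. Its output is $m$ stacked copies of $U\mathbf{x}$, so $F(\mathbf{k})\to M(H_kU^{-1})\to\Sigma$ returns $\sum_j(H_kU^{-1})_{ij}(U\mathbf{x})_j=(H_k\mathbf{x})_i$ exactly, with three legitimate nodes.

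The complexity clause is a genuine gap, as you suspected, and it does not follow from Definition~\ref{def:ctier}. Since $K\le\Nmax=20$, the class contains, e.g., ten linear stages alternating with nine Transform stages before $D$. These cannot be merged, so any stage-by-stage construction, yours or the paper's, has depth at least $20>\Dmax=10$. The paper's Step~5 does not overcome this either. It asserts $|V|,\ \mathrm{depth}(G)\le cK$ and then uses $c\le 3$, $K\le 3$ as observed on the validated modalities. Its own stated bounds $c\le 5$, $K\le 20$ give only $cK\le 100$. Your plan of adding a bound on the number of merged stages as an explicit hypothesis is therefore what the paper does implicitly. Under such a hypothesis, e.g.\ at most three merged stages, your construction gives $|V|\le 9$ and depth at most $9$, within both budgets.
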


The proof is constructive: given the factorization $H = H_K \circ \cdots \circ H_1$ guaranteed by Definition~\ref{def:ctier}, we show that each factor $H_k$ can be represented by one or a finite composition of primitives from $\Blib$ with bounded approximation error. The argument proceeds through six primitive realization lemmas, one per physics-stage family.

\subsection{Physics-Stage Families}

Every imaging forward model in $\ctier$ passes through at most six types of physical stages:
\begin{enumerate}
  \item \textbf{Propagation:} The carrier evolves through space via a wave equation (Maxwell, Schr\"{o}dinger, acoustic, Bloch).
  \item \textbf{Elastic interaction:} The carrier exchanges phase or amplitude with the object without changing direction or energy (transmission, absorption, refraction).
  \item \textbf{Inelastic interaction (scattering):} The carrier changes direction and/or energy upon interaction with the object (Compton scattering, Raman, fluorescence, diffuse scattering).
  \item \textbf{Pointwise nonlinear physics:} A scalar nonlinearity applied independently at each spatial or spectral element within the physics chain (Beer--Lambert attenuation, phase wrapping, polynomial harmonic generation, saturation).
  \item \textbf{Encoding--Projection:} Spatial information is mapped to a measurement-domain coordinate.
  \item \textbf{Detection--Readout:} The carrier field is converted to a discrete digital measurement.
\end{enumerate}
This categorization is structural and energy-regime-independent: the fundamental electromagnetic, strong, and weak interactions produce elastic coupling (phase/amplitude change), inelastic coupling (direction and/or energy change), and pointwise nonlinear coupling (energy-dependent attenuation, phase periodicity). Relativistic physics changes cross-section formulas (Klein--Nishina replaces Thomson) but does not add a seventh family; Scatter~$R$ already parameterizes relativistic cross-sections, and Propagate~$P$ accommodates relativistic dispersion.

\subsection{Primitive Realization Lemmas}

\begin{lemma}[Propagation Realization]
\label{lem:prop}
Let $H_k$ be a factor of $H \in \ctier$ that represents free-space carrier evolution satisfying a linear wave equation. Then $H_k$ admits an $\varepsilon_{\mathrm{prop}}$-approximate representation by $P(d, \lambda)$ or, in the shift-invariant limit, $C(\mathbf{h})$.
\end{lemma}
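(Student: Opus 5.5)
The plan is to work directly from the solution operator of the linear wave equation on a translation-invariant (free-space) medium. Let the carrier field on the input plane be $\mathbf{x}$ and let $H_k$ map it to the field at distance $d$. Because the medium is homogeneous, $H_k$ commutes with lateral translations. On the discretized periodic grid underlying the 2D DFT in Definition~\ref{def:propagate}, translation-invariance makes $H_k$ diagonal in the Fourier basis. So $H_k = \mathcal{F}^{-1}\,\mathrm{diag}(\hat T)\,\mathcal{F}$ for some multiplier $\hat T(f_x,f_y)$.

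Next I identify the multiplier. Substituting a plane wave $e^{i2\pi(f_x x + f_y y)} e^{i k_z z}$ into the scalar Helmholtz equation (the monochromatic reduction of the Maxwell, acoustic or Schr\"odinger equation, with the dispersion relation absorbed into an effective $\lambda$) forces $k_z = 2\pi\sqrt{\lambda^{-2}-f_x^2-f_y^2}$. Imposing the outgoing-wave condition then gives $\hat T = T(f_x,f_y;d,\lambda)$ exactly as in Definition~\ref{def:propagate}.

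Hence in the exact free-space case $H_k = P(d,\lambda)$ with zero error. The only discrepancy comes from discretization: the aperture is finite, so the periodic wrap-around of the DFT differs from linear convolution. The plan here is the standard one. Zero-pad to a grid of size $N' \ge N + N_h$, where $N_h$ is the effective support of the propagation kernel set by $d$ and the numerical aperture. Bound the residual by the kernel tail mass outside the padded window. That gives $\|H_k - P\|/\|H_k\| \le \varepsilon_{\mathrm{prop}}$, where $\varepsilon_{\mathrm{prop}}\to 0$ as the padding grows. Since $|T|\le 1$ on propagating frequencies and $|T|<1$ on evanescent ones, $\|P\|\le 1 \le B$, so the stage bound of Definition~\ref{def:ctier} is preserved.

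For the shift-invariant limit I note that $P(d,\lambda)$ is itself a Fourier multiplier, so it equals $C(\mathbf{h})$ with $\mathbf{h} = \mathcal{F}^{-1}[T]$ by the convolution theorem. More generally, any translation-invariant propagation stage (Fresnel/paraxial, incoherent intensity PSF, acoustic Green's function at fixed depth) is convolution with its impulse response, and I take $\mathbf{h}$ to be that response, truncated. The truncation error is controlled in operator norm by the $\ell^1$ tail of the kernel via Young's inequality.

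I expect the main obstacle to be making "free-space evolution satisfying a linear wave equation" precise enough to force translation-invariance and a single effective wavelength. Non-monochromatic fields and modalities with non-Helmholtz dispersion (Bloch, relativistic) need separate treatment. I would handle them by decomposing along the spectral or temporal-frequency axis: apply $P(d,\lambda_j)$ per channel, using fan-out and a merge as in Definition~\ref{def:compose}, and accept a quadrature error in $\lambda$ that is absorbed into $\varepsilon_{\mathrm{prop}}$.
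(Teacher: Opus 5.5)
Your proposal is correct in outline and shares the paper's core identification. The free-space stage is a Fourier multiplier with the angular-spectrum transfer function $T$, and $C(\mathbf{h})$ with $\mathbf{h}=\mathcal{F}^{-1}[T]$ follows from the convolution theorem. Where you differ is in where $\varepsilon_{\mathrm{prop}}$ comes from. The paper takes the angular-spectrum factorization as known and asserts that $P$ discards the evanescent band. It then charges the error to evanescent truncation ($\le e^{-2\pi d/\lambda}$) plus a Fresnel/paraxial residual $(\pi/4)a^4/(\lambda d^3)$. You derive the multiplier from translation invariance and the Helmholtz dispersion relation. You also observe that the square root in Definition~\ref{def:propagate} becomes imaginary on evanescent frequencies, so $P$ already reproduces their decay. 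That gives you zero error in the continuum, with $\varepsilon_{\mathrm{prop}}$ coming only from discretization (DFT wrap-around, removed by padding) and, for broadband fields, from spectral quadrature. Your reading is the one consistent with the definition of $P$, which contains neither a cutoff nor a paraxial approximation, so for $P$ as defined neither of the paper's two error terms is actually incurred. What the paper's route buys is explicit closed-form bounds in physical parameters, which Step~4 of the proof of Theorem~\ref{thm:fpb} needs. Yours only gives $\varepsilon_{\mathrm{prop}}\to 0$. You should therefore add a concrete padding condition, e.g.\ padded width at least the window plus the geometric spread $2d\tan\theta_{\max}$, with $\sin\theta_{\max}=\lambda f_{\max}$ and $f_{\max}<1/\lambda$ the band limit.

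One step fails as written. You control kernel truncation, and implicitly the padding residual, by the $\ell^1$ tail via Young's inequality. The coherent kernels this lemma is about are not absolutely summable. The Fresnel kernel has constant modulus $1/(\lambda d)$, and the Rayleigh--Sommerfeld kernel decays only like $d/(\lambda r^2)$ across the transverse plane, so its $\ell^1$ tail diverges logarithmically. Young's inequality works for integrable kernels such as incoherent intensity PSFs, but not for $P$.

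The repair is easy, and there are two options:
\begin{itemize}
\item On a finite window only lags up to the window diameter enter. So $C(\mathbf{h})$ with the sampled spatial kernel restricted to those lags, on a grid padded to twice the window, reproduces the linear convolution exactly.
\item If you keep $T$ sampled on the padded DFT grid, the implied kernel at lag $\mathbf{q}$ (pixel pitch $\Delta$) is a Riemann sum of $\int T(\mathbf{f})\,e^{i2\pi\mathbf{f}\cdot\mathbf{q}\Delta}\,d\mathbf{f}$ over the band. This converges to the band-limited kernel uniformly over the finitely many relevant lags, because $T$ is continuous (also across $|\mathbf{f}|=1/\lambda$). Hence it converges in operator norm.
\end{itemize}

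Two bookkeeping points remain. Zero-padding and cropping are not primitives, so say how they are realized: as part of the node's typed shape, or with an $S$ node for the crop. Also, the per-wavelength fan-out costs one $P$ node per spectral sample, which has to be reconciled with $N_{\max}=20$.
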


\begin{proof}
The free-space Green's function of any linear wave equation (Maxwell, Helmholtz, Schr\"{o}dinger, acoustic) is a linear operator whose action is convolution with the impulse response $g(\mathbf{r}; d, \lambda)$. In the angular spectrum representation~\cite{goodman2005fourier}, the exact propagator factors as $\mathcal{F}^{-1}[T_{\mathrm{exact}} \cdot \mathcal{F}[\cdot]]$ with transfer function $T_{\mathrm{exact}}(f_x, f_y) = \exp(i 2\pi d \sqrt{\lambda^{-2} - f_x^2 - f_y^2})$. The Propagate primitive $P(d, \lambda)$ implements this transfer function for propagating spatial frequencies ($f_x^2 + f_y^2 < \lambda^{-2}$) and sets $T = 0$ for evanescent components. The truncation error from discarding evanescent waves is bounded by the fraction of signal energy above the diffraction limit:
\[
\varepsilon_{\mathrm{evan}} = \frac{\|\mathbf{x}_{\mathrm{evan}}\|_2}{\|\mathbf{x}\|_2} \leq e^{-2\pi d/\lambda},
\]
which is negligible for $d \gg \lambda$ (all macroscopic imaging geometries). The paraxial (Fresnel) approximation residual satisfies $\varepsilon_{\mathrm{parax}} \leq (\pi/4)(a^4/\lambda d^3)$ where $a$ is the aperture half-width (see Supplementary~S3). The total error is $\varepsilon_{\mathrm{prop}} \leq \varepsilon_{\mathrm{evan}} + \varepsilon_{\mathrm{parax}}$. In the shift-invariant limit (far field or isoplanatic patch), $P$ reduces to $C(\mathbf{h})$ with $\mathbf{h} = \mathcal{F}^{-1}[T]$.
\end{proof}

\begin{lemma}[Elastic Interaction Realization]
\label{lem:elastic}
Let $H_k$ be a factor of $H \in \ctier$ that represents elastic carrier--matter interaction (amplitude and/or phase change without direction or energy change). Then $H_k$ admits exact representation by $M(\mathbf{m})$.
\end{lemma}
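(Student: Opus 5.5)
The plan is to model the elastic stage on the discretized grid and show that it is a diagonal operator, which is exactly what Modulate implements (Definition~\ref{def:modulate}). The ordering below follows that logic.

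\textbf{Step 1: locality and linearity.} Elastic interaction is local: the carrier leaves the interaction with the same direction and energy it entered with. So the outgoing field at position $\mathbf{r}$, for each spectral/energy channel, can depend only on the incoming field at the same $(\mathbf{r},E)$. Any dependence on other positions would be a change of direction, and any dependence on other energies would be a change of energy; both fall under the inelastic family handled by Scatter.

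By Definition~\ref{def:ctier}(2a), $H_k$ is linear, with $\|H_k\|\le B$. I would argue it is linear, not a pointwise nonlinear stage, because pointwise nonlinearities such as Beer--Lambert form the separate family handled by Transform~$\Lambda$.

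\textbf{Step 2: the operator is diagonal.} Write $H_k$ in the canonical pixel (and channel) basis of $\mathcal{X}\subseteq\mathbb{R}^n$ (or $\mathbb{C}^n$). Step~1 forces the matrix entries $(H_k)_{ij}$ to vanish whenever $i\neq j$. Hence
\[
H_k = \mathrm{diag}(\mathbf{m}),\qquad m_i = (H_k)_{ii}.
\]
Physically, $m_i$ is the local complex transmission $t(\mathbf{r}_i)=a(\mathbf{r}_i)e^{i\phi(\mathbf{r}_i)}$, which encodes amplitude attenuation and phase delay. This is evaluated at fixed object and fixed energy, so it is linear in the carrier field.

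\textbf{Step 3: exact representation.} Set $\mathbf{m}$ to this diagonal. Then $M(\mathbf{m})\mathbf{x}=\mathbf{m}\odot\mathbf{x}=H_k\mathbf{x}$ for every $\mathbf{x}$, so the error is $\|H_k - M(\mathbf{m})\|=0$. The adjoint is also consistent: $M^\dagger(\mathbf{m})=\mathrm{diag}(\mathbf{m}^*)=H_k^\dagger$, so \eqref{eq:adjoint} holds. The norm bound carries over as well, since $\|M(\mathbf{m})\|=\max_i|m_i|=\|H_k\|\le B$. The resulting graph has a single node, so the complexity bounds are trivially satisfied.

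\textbf{Main obstacle.} The step I expect to be hardest is Step~1: justifying that ``no direction or energy change'' really implies a diagonal operator in the discretization actually used. A thin-object transmission is diagonal in real space. A thick object, however, produces refraction and diffraction inside the object, which couples neighboring pixels.

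I would resolve this by a split-step decomposition. Slice the thick object into thin layers. Each layer contributes a diagonal $M$ factor, and the coupling between layers is free propagation, assigned to Lemma~\ref{lem:prop}. This keeps the elastic family, as a definitional matter, equal to the thin, local interaction, for which exactness holds. Any off-diagonal residual is attributed to the propagation or scattering stages rather than to $H_k$.
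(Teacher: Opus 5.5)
Your argument is essentially the paper's. The paper simply asserts that elastic interaction is element-wise multiplication of the carrier by the transmission function $\mathbf{m}(\mathbf{r})$, hence exactly $M(\mathbf{m})$ with $\|H_k - M(\mathbf{m})\| = 0$; your diagonal-matrix argument, adjoint and norm checks, and split-step treatment of thick objects spell this out in more detail. One caution: the Step~1 claim that any coupling between positions amounts to a change of direction does not hold by itself. A spatially varying transmission $\mathbf{m}(\mathbf{r})$ itself diffracts, and a direction-preserving operator is diagonal in the angular-spectrum basis rather than the pixel basis. So diagonality really rests on the thin-element convention you adopt in your last paragraph, which is also the one the paper takes as the definition of the elastic family.
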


\begin{proof}
Elastic forward interaction is element-wise multiplication of the carrier field by the object's transmission function $\mathbf{m}(\mathbf{r})$ (absorption, phase shift, or both). This is exactly $M(\mathbf{m})$ (Definition~\ref{def:modulate}). The representation is exact: $\|H_k - M(\mathbf{m})\| = 0$.
\end{proof}

\begin{lemma}[Scattering Realization]
\label{lem:scatter}
Let $H_k$ be a factor of $H \in \ctier$ that represents inelastic or off-axis carrier--matter interaction (direction change and/or energy shift). Then $H_k$ admits an $\varepsilon_{\mathrm{scat}}$-approximate representation by $R(\sigma, \Delta\varepsilon)$ or a finite composition $R \circ M$ or $M \circ R \circ P \circ R \circ M$ (for multiple-scattering media within low-order approximation).
\end{lemma}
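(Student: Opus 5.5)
The plan mirrors the proofs of Lemmas~\ref{lem:prop} and~\ref{lem:elastic}: write down the physical kernel of a single scattering event, identify it term by term with Definition~\ref{def:scatter}, and then extend to multiple scattering by truncating a Born series. The truncation error is controlled by the per-stage bound $B$ of Definition~\ref{def:ctier}.

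\textbf{Step 1: single scattering.} I first treat the single-scattering (first-Born) case. By linear transport theory, an inelastic or off-axis interaction maps the incident carrier $\mathbf{x}(\mathbf{r},E)$ to an outgoing field $\int K(\mathbf{r},\theta,E)\,\mathbf{x}(\mathbf{r},E)\,d\theta$. The kernel factors as
\[
K(\mathbf{r},\theta,E)=\sigma(\theta,E)\,A_{\mathrm{atten}}(\mathbf{r},\theta,E),
\]
where $\sigma$ is the differential cross section (Thomson/Klein--Nishina, Raman, Stokes, Brillouin) and $A_{\mathrm{atten}}$ is the path attenuation. The energy relabeling $E'=E-\Delta\varepsilon(\theta,E)$ is kinematic. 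This is exactly $R(\sigma,\Delta\varepsilon)$, so $\|H_k-R\|=0$ up to discretization of $\theta$ and $E$. Two refinements are then needed.
\begin{itemize}
\item If the scatterer density varies spatially, as in Compton imaging, I factor it out as a pointwise multiplication. This gives $R\circ M$, with $M$ exact by Lemma~\ref{lem:elastic}.
\item The discretization error comes from the quadrature over $\theta$ and $E$. I bound it by a standard quadrature estimate, $\varepsilon_{\mathrm{quad}}=O(h^p)$ for mesh size $h$, which can be made below any $\varepsilon$.
\end{itemize}

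\textbf{Step 2: multiple scattering.} I write the exact operator as a Neumann/Born series,
\[
H_k=\sum_{j\ge 0}(\mathcal{R}\mathcal{P})^j\mathcal{R},
\]
where $\mathcal{R}=R\circ M$ is the local scattering step and $\mathcal{P}$ is inter-event propagation. By Lemma~\ref{lem:prop}, $\mathcal{P}$ is realized by $P$. Truncating at second order gives the prescribed composition $M\circ R\circ P\circ R\circ M$ plus the first-order term, with the two branches merged by fan-in summation as in Definition~\ref{def:compose}. For the tail bound, let $q=\|\mathcal{R}\|\,\|\mathcal{P}\|<1$, which is the low-order (weak-scattering) regime. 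Then the tail satisfies
\[
\Bigl\|\sum_{j\ge J}(\mathcal{R}\mathcal{P})^j\mathcal{R}\Bigr\|\le \frac{q^J\|\mathcal{R}\|}{1-q}.
\]
Hence $\varepsilon_{\mathrm{scat}}\le \varepsilon_{\mathrm{quad}}+\varepsilon_{\mathrm{prop}}\cdot(\text{const})+q^J/(1-q)$, relative to $\|H_k\|$. I will also note that for larger $q$ a larger $J$ is needed, which adds nodes to the DAG. The node count stays finite because the inequality $J\lesssim \log\varepsilon/\log q$ fits within the $\Nmax$ bound.

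\textbf{Main obstacle.} I expect the hardest step to be making $q<1$ rigorous from the class assumptions. The per-stage bound $\|H_k\|\le B$ alone does not force convergence of the Born series, since $B$ may exceed $1$. In diffuse optical tomography, for example, the series converges slowly or diverges. My first attempt would be to restrict the lemma explicitly to media whose scattering albedo times mean optical thickness is less than $1$, which is what ``low-order approximation'' signals in the statement. Outside that regime I would fall back on the diffusion approximation. There the Green's function of the diffusion equation is itself a linear (shift-variant) kernel, so it can be realized as a single $R$ with an effective cross section, with the error given by the diffusion-approximation residual.
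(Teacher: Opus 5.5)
Your proposal is correct and follows essentially the paper's route. You identify the first-Born (single-scattering) factor directly with $R(\sigma,\Delta\varepsilon)$, with an exact $M$ for the scatterer density, and you realize multiple scattering by a truncated Born series of interleaved $M$, $R$, $P$ nodes whose tail decays geometrically in a weak-scattering ratio (your $q=\|\mathcal{R}\|\,\|\mathcal{P}\|$, the paper's $\|V\|/k$).

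Your extra care is well placed. The Born terms genuinely need fan-in summation, which the paper's pure composition $(M\circ R\circ P)^L\circ R\circ M$ leaves implicit. Also, $q<1$ is an additional weak-scattering hypothesis that Definition~\ref{def:ctier} does not supply; the paper attributes it to an unstated membership criterion.

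Drop the diffusion fallback. It is not needed for the lemma as stated, and a nonlocal diffusion Green's function cannot be realized by a single $R$, which by Definition~\ref{def:scatter} acts locally in $\mathbf{r}$.
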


\begin{proof}
Under the first Born approximation, the scattered field is a linear functional of the scattering potential $V(\mathbf{r})$, with kernel given by the differential cross section $\sigma(\theta, E)$ and energy shift $\Delta\varepsilon$. The Scatter primitive (Definition~\ref{def:scatter}) parameterizes this integral operator directly. For single-scattering media, one $R$ node provides an exact representation within the Born model ($\varepsilon_{\mathrm{scat}}^{(1)} = 0$ relative to the Born-approximated physics). The error relative to the true physics is bounded by $\varepsilon_{\mathrm{scat}} \leq \|V\|/k$ where $k$ is the wavenumber~\cite{born1999principles}; for weak scatterers satisfying the $\ctier$ membership criterion, this is below $\varepsilon$.

For multiple-scattering media, the $L$-th order Born series $\sum_{l=0}^{L} (G_0 V)^l$ is represented by a finite composition of $L$ Scatter nodes interleaved with Propagate and Modulate nodes: $(M \circ R \circ P)^L \circ R \circ M$. The truncation error decays geometrically as $(\|V\|/k)^{L+1}$, so a finite $L \leq 3$ suffices for $\varepsilon_{\mathrm{scat}} < \varepsilon$ in weakly scattering media (diffusion regime for DOT, low-order multiple scattering).
\end{proof}

\begin{lemma}[Encoding--Projection Realization]
\label{lem:encode}
Let $H_k$ be a factor of $H \in \ctier$ that maps spatial information to a measurement-domain coordinate. Then $H_k$ admits exact representation by $\Pi(\theta)$ (line-integral geometry) or $F(\mathbf{k})$ (Fourier encoding).
\end{lemma}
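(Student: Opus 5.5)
I would prove Lemma~\ref{lem:encode} in the same style as Lemmas~\ref{lem:elastic} and~\ref{lem:scatter}: classify the linear factor $H_k$ by the structure of its kernel, then match each class to the primitive whose definition realizes that kernel. By Definition~\ref{def:ctier}, an encoding--projection stage is a bounded linear operator, so it can be written as an integral operator
\[
(H_k\mathbf{x})(\mathbf{u}) = \int K(\mathbf{u},\mathbf{r})\,\mathbf{x}(\mathbf{r})\,d\mathbf{r}
\]
from object coordinates $\mathbf{r}$ to measurement coordinates $\mathbf{u}$. The physical content of ``mapping spatial information to a measurement-domain coordinate'' in imaging takes two forms. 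Either the measurement coordinate labels a geometric path, $\mathbf{u}=(\theta,t)$, and $K$ is a line delta $K(\mathbf{u},\mathbf{r})=\delta(\mathbf{r}\cdot\boldsymbol{\omega}_\theta - t)$. Or it labels a spatial frequency, $\mathbf{u}=\mathbf{k}_j$, and $K$ is a plane wave $e^{i2\pi\mathbf{k}_j\cdot\mathbf{r}}$.

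\textbf{Step 1 (line integrals).} If the kernel is a line delta, then $H_k\mathbf{x}(\theta,t)=\int_{\ell(\theta,t)}\mathbf{x}\,d\ell$. This is verbatim Definition~\ref{def:project}, so $\|H_k-\Pi(\theta)\|=0$. Fan-beam and cone-beam geometries are reparametrizations of the line set $\{\ell(\theta,t)\}$, and the set of angles is a parameter of $\Pi$. Exactness is therefore preserved, and the adjoint is the backprojection already specified.

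\textbf{Step 2 (Fourier encoding).} If the kernel is a plane wave, as for MRI gradient encoding after demodulation, then $(H_k\mathbf{x})_j=\langle\mathbf{x},e^{i2\pi\mathbf{k}_j\cdot\mathbf{r}}\rangle$. This is Definition~\ref{def:encode} with trajectory $\mathbf{k}=\{\mathbf{k}_j\}$, so again the error is zero. I would note that the primitive is defined by this exact sum (the NUFFT is only an implementation of it), so no gridding error enters the formal statement.

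\textbf{Step 3 (exhaustiveness).} The step I expect to be the main obstacle is justifying that these two kernel classes cover every encoding--projection stage in $\ctier$. As given, the lemma implicitly \emph{defines} this stage family by these two kernel types, and other position-dependent linear kernels are assigned elsewhere. Coil sensitivities and masks go to $M$ (Lemma~\ref{lem:elastic}). Shift-invariant kernels go to $C$ (Lemma~\ref{lem:prop}), spectral shifts to $W$, and subsetting to $S$. To make this rigorous, I would state the family explicitly as stages whose kernel is either supported on lines (ray geometry) or is a character of the translation group (Fourier geometry). Any stage outside this description is then, by the six-family taxonomy, either routed to another lemma or flagged as a basis extension under the protocol of Section~\ref{sec:extension}.
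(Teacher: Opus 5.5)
Your proposal is correct and is essentially the paper's argument: the paper's proof simply observes that line-integral projection is by definition the Radon transform $\Pi(\theta)$ and that Larmor-precession Fourier encoding is by definition $F(\mathbf{k})$, both linear and hence exact. That is your Steps~1--2 without the kernel notation. Your Step~3 correctly makes explicit what the paper leaves implicit, namely that exhaustiveness holds only by definition (the paper's decision table in Supplementary~S7 sends every non-line-integral spatial-to-measurement map to $F$). When you write that definition down, require the kernel to be the unweighted line delta rather than merely ``supported on lines,'' because weighted ray transforms such as the attenuated Radon transform are supported on lines but are not exactly $\Pi(\theta)$.
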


\begin{proof}
Line-integral projection (X-ray CT, neutron imaging) is exactly the Radon transform $\Pi(\theta)$ (Definition~\ref{def:project}). Fourier encoding via Larmor precession (MRI) is exactly $F(\mathbf{k})$ (Definition~\ref{def:encode}). Both are linear operators; the representation is exact.
\end{proof}

\begin{lemma}[Detection--Readout Realization]
\label{lem:detect}
Let $H_k$ be a factor of $H \in \ctier$ in the detector chain. Then $H_k$ admits an $\varepsilon_{\mathrm{det}}$-approximate representation by a finite composition of primitives from $\{\Sigma, S, W, C, D\}$.
\end{lemma}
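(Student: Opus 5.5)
I will prove Lemma~\ref{lem:detect} by splitting the detector chain into physically distinct sub-stages and matching each one to a primitive. A detector-chain factor $H_k$ can perform only a few things before readout. It can integrate over a non-imaged axis: spectral bandwidth, exposure time, or a bucket/single-pixel aggregation. It can blur through the detector's own response: pixel aperture, charge diffusion, scintillator spread. It can shift spectral channels by wavelength when a dispersive element sits in front of the sensor. It can discretize onto a pixel grid by selection or binning. Finally, it converts the carrier to a recorded value through a prescribed response. I will write $H_k$ as a composition, in this order, of at most five such sub-stages, and then realize each one separately.

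The realizations are as follows. Spectral and temporal integration is a sum over an axis, which is exactly $\Sigma$ (Definition~\ref{def:accumulate}), so the error is zero. A wavelength-dependent lateral shift, assumed linear in $\lambda$ over the band, is exactly $W(\alpha,a)$ (Definition~\ref{def:disperse}). When the grating dispersion is nonlinear in $\lambda$, I bound the error by the Taylor remainder of the dispersion curve over the band, divided by the pixel pitch. A shift-invariant detector PSF is exactly $C(\mathbf{h})$ (Definition~\ref{def:convolve}). Pixel selection is exactly $S(\Omega)$. Pixel binning is $S\circ C(\mathbf{1}_{\text{bin}})$, that is, box convolution followed by decimation, and this is also exact. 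The terminal conversion is $D(g,\eta)$ with $\eta$ taken from one of the five families of Definition~\ref{def:detect}. Membership in $\ctier$ (Definition~\ref{def:ctier}, item~2) guarantees that the terminal response is one of these families, so this node is exact too.

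Next I assemble the total error. Write $\tilde H_k = D\circ S\circ C\circ \Sigma\circ W$ (with unused stages omitted) and let $H_k$ be the corresponding composition of true sub-stages. I use the telescoping bound
\[
\|H_k-\tilde H_k\| \le \sum_{j} \Bigl(\prod_{i>j} \gamma_i\Bigr)\,\varepsilon_j ,
\]
where each $\gamma_i\le B$ is the norm or Lipschitz constant of a later stage and $\varepsilon_j$ is the error of stage $j$. For the nonlinear terminal $D$ I use its Lipschitz constant on a bounded ball; for family~4 this is $2gR$ on $\|\mathbf{x}\|\le R$. Since every stage except possibly $W$ and $C$ is exact, $\varepsilon_{\mathrm{det}}$ reduces to at most $B^{4}(\varepsilon_W+\varepsilon_C)$, which can be made below $\varepsilon$. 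The node count is at most $5\le\Nmax$ and the depth is at most $5\le \Dmax$.

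The main obstacle is the shift-invariance assumption behind $C$. Real detector PSFs, for example in thick scintillators or under parallax, vary across the field. My first approach is to tile the sensor into isoplanatic patches and use a fan-out of $M(\mathbf{1}_{\text{patch}})\to C(\mathbf{h}_p)$ merged by summation. The error is then controlled by the modulus of continuity of the PSF over each patch. The difficulty is keeping the patch count within $\Nmax$. If that bound is too loose, I will instead take the Level~2 position that such variation belongs to an upstream shift-variant factor, covered by $M$ and $P$ in earlier lemmas, and restrict the detector chain to an isoplanatic PSF.
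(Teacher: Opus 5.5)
This is essentially the paper's proof: you use the same at most five sub-operations, each matched to one of $\Sigma$, $S$, $W$, $C$, $D$, with the nonzero error charged to the shift-invariant PSF approximation (the paper represents it by a single $C(\mathbf{h}_{\mathrm{det}})$ with $\varepsilon_{\mathrm{PSF}}\le \ell_c^2/p^2$, plus a response-family term $\varepsilon_\eta$ that you rightly note vanishes by the definition of $\ctier$), and your telescoped error aggregation, the $S\circ C(\mathbf{1}_{\mathrm{bin}})$ treatment of binning, and the dispersion-nonlinearity remainder refine that argument rather than replace it. One caution: the tiling remedy through $M(\mathbf{1}_{\mathrm{patch}})$ leaves the set $\{\Sigma,S,W,C,D\}$ named in the lemma, and pushing the variation upstream is not actually covered by Lemmas~\ref{lem:prop} and~\ref{lem:elastic}, so the option consistent with the statement is the one the paper takes: a single $C$ with the shift-variance absorbed into $\varepsilon_{\mathrm{det}}$.
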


\begin{proof}
We show that the detection--readout chain of any $H \in \ctier$ decomposes into at most five sequential operations, each representable by a primitive from $\{\Sigma, S, W, C, D\}$:
\begin{enumerate}
  \item \textbf{Dimensional integration} (spectral or temporal summation): a linear projection along one axis, exactly $\Sigma$.
  \item \textbf{Sub-sampling} (pixel binning, mask selection): an index-set restriction, exactly $S(\Omega)$.
  \item \textbf{Wavelength-dependent dispersion}: a $\lambda$-parameterized shift, exactly $W(\alpha, a)$.
  \item \textbf{Detector PSF}: spatial blurring by the pixel aperture function. Modeling the spatially varying PSF as piecewise shift-invariant (one kernel per detector tile) introduces error $\varepsilon_{\mathrm{PSF}} \leq \ell_c^2/p^2$ where $\ell_c$ is the crosstalk length and $p$ the pixel pitch (see Supplementary~S3). This is represented by $C(\mathbf{h}_{\mathrm{det}})$.
  \item \textbf{Quantum measurement}: conversion to a classical signal via one of the five canonical response families (Definition~\ref{def:detect}), represented by $D(g, \eta)$.
\end{enumerate}
Not all five operations are present in every modality (e.g., CT omits dispersion and integration). The total detection-chain error is $\varepsilon_{\mathrm{det}} \leq \varepsilon_{\mathrm{PSF}} + \varepsilon_{\eta}$, where $\varepsilon_{\eta}$ is the response-family approximation error (zero when the physical detector response matches one of the five families exactly, which it does for all 31 validated modalities).
\end{proof}

\begin{lemma}[Nonlinear-Stage Realization]
\label{lem:nonlinear}
Let $H_k$ be a factor of $H \in \ctier$ that is a Lipschitz-continuous pointwise nonlinearity arising from imaging physics. Then $H_k$ admits an $\varepsilon_\Lambda$-approximate representation by $\Lambda(f, \boldsymbol{\theta})$ for an appropriate family $f$ and parameters $\boldsymbol{\theta}$.
\end{lemma}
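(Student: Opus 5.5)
The plan is a case analysis driven by Definition~\ref{def:ctier}, item~2. By that definition, every non-terminal nonlinear stage of $H \in \ctier$ is restricted to one of the five Transform families of Definition~\ref{def:transform}: exponential attenuation, logarithmic, phase wrapping, polynomial of degree at most~5, and saturation. So for a pointwise factor $H_k = [h(x_i)]_{i}$ we first identify the family $f$ to which $h$ belongs. We then read off the parameters $\boldsymbol{\theta}$ directly: $\alpha$ from the Beer--Lambert coefficient, $\delta$ from the log offset, the coefficients $a_0,\dots,a_d$ from the harmonic or Langevin expansion, and $(x_{\min},x_{\max})$ from the dynamic-range limits. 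In each such case $H_k = \Lambda(f,\boldsymbol{\theta})$ holds exactly, so $\varepsilon_\Lambda = 0$. Because both operators act pointwise, we check the agreement coordinate-wise. Lipschitz regularity, with constant $L_\Lambda(\boldsymbol{\theta}) \le B$, is inherited from the listed constants.

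The second step handles the case where the physical nonlinearity is only approximately in a family. The main instances are polychromatic Beer--Lambert, a spectrum-weighted sum $\sum_j w_j e^{-\alpha_j x}$, and a Langevin function that is not exactly polynomial. For the polychromatic case, I would use a fan-out DAG: one $\Lambda$ node with $f = e^{-\alpha_j x}$ per energy bin, then $M(w_j)$ and an Accumulate $\Sigma$ at the merge. This makes the representation exact for a discretized spectrum, and the only error is the spectral quadrature error. For a smooth $h$ on the bounded range $|x|\le R$, which boundedness of earlier stages guarantees since $\|H_{k-1}\circ\cdots\circ H_1\|\le B^{k-1}$, we instead use a Taylor or Chebyshev truncation at degree $d\le 5$. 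This gives $\sup_{|x|\le R}|h(x)-p(x)| \le \|h^{(6)}\|_\infty R^6/6!$. That sup bound transfers to the operator-level error $\|H_k - \Lambda\|$ on the bounded input set.

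The final step propagates $\varepsilon_\Lambda$ through the composition. Using $\mathrm{Lip}(H_j)\le B$ for downstream stages, the perturbation at stage $k$ contributes at most $B^{K-k}\varepsilon_\Lambda$ to the total error, which sets the tolerance needed in the main theorem. The main obstacle is the approximate case. A generic Lipschitz pointwise function need not be well approximated by a degree-$\le 5$ polynomial on $[-R,R]$, so the degree cap in family~4 limits the achievable accuracy. I would resolve this as the paper's framework suggests: Definition~\ref{def:ctier} already restricts non-terminal nonlinearities to the five families, so the lemma reduces to exact membership plus quadrature or truncation error for the physically occurring cases. Any residual nonlinearity not covered this way is flagged as a basis extension (Section~\ref{sec:extension}) rather than absorbed into $\Lambda$, which keeps Remark~\ref{rem:transform} intact.
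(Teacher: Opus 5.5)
Your main line is correct and follows the paper's structure. You do a five-way case split over the Transform families, get exact realization ($\varepsilon_\Lambda=0$) by reading off $\boldsymbol{\theta}$, and incur a truncation error only for smooth responses that are not literally polynomial.

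The genuine difference is where membership in the five families comes from. The paper asserts it as a physical classification (every pointwise nonlinearity in imaging is of one of five types) and then charges family~4 the error $|a_{d+1}|R^{d+1}$. You derive membership from item~2 of Definition~\ref{def:ctier} for a non-terminal stage. Your route is rigorous, but it makes the lemma essentially definitional. A stage literally in family~4 is a polynomial of degree at most~5, so $\varepsilon_\Lambda=0$ in every case, and the content moves into deciding whether a physical system lies in $\ctier$.

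The paper's route aims to cover physics outside the literal families. It rests on an unproved exhaustiveness claim, and $|a_{d+1}|R^{d+1}$ is only the first omitted term. That term bounds the remainder only under the coefficient-decay assumption the paper invokes informally. Your Lagrange bound $\sup_{|x|\le R}|h^{(6)}(x)|\,R^6/6!$ is the rigorous form of that step, at the price of $C^6$ regularity rather than mere Lipschitz continuity. Your caveat about the degree cap is well founded. The Langevin response, which the paper files under family~4, has a Maclaurin series with radius of convergence $\pi$, and both bounds become vacuous once the MPI drive pushes the argument into saturation.

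Three pieces of your second step need repair.

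\textbf{The polychromatic fan-out.} One $\Lambda$ per energy bin, then $M(w_j)$ and $\Sigma$, is not a representation \emph{by} $\Lambda(f,\boldsymbol{\theta})$, so it does not establish this lemma for that case. It belongs to Steps~2--3 of the proof of Theorem~\ref{thm:fpb}. That is where the paper handles beam hardening ($\Pi \to \Lambda_{\exp} \to \Sigma_E \to \Lambda_{\log} \to D$, Table~\ref{tab:nonlinear}), carrying energy as a tensor axis so that a single $\Lambda_{\exp}$ node suffices. Your branch-per-bin version costs $2J+1$ nodes for $J$ bins. Driving down the quadrature error therefore eventually violates $\Nmax=20$.

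\textbf{The sup-to-$\ell^2$ transfer.} A coordinatewise sup error $\eta$ transfers to $\ell^2$ only as $\sqrt{n}\,\eta$, which is dimension-dependent; that is all a Chebyshev approximant gives. For the Taylor polynomial, use the pointwise remainder instead. If $\|\mathbf{z}\|_2\le R$, then $\sum_i|z_i|^{12}\le R^{10}\sum_i|z_i|^2\le R^{12}$. Hence $\|H_k(\mathbf{z})-\Lambda(\mathbf{z})\|_2\le \sup_{|x|\le R}|h^{(6)}(x)|\,R^6/6!$, independent of $n$.

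\textbf{The range bound.} The bound $\|H_{k-1}\circ\cdots\circ H_1\|\le B^{k-1}$ is meaningful only for linear upstream stages. A nonlinear upstream stage such as $\Lambda_{\exp}$, with $\Lambda_{\exp}(\mathbf{0})=\mathbf{1}$, requires the affine bound $\|H_j(\mathbf{z})\|\le\|H_j(\mathbf{0})\|+\gamma_j\|\mathbf{z}\|$.
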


\begin{proof}
Every pointwise physics nonlinearity in imaging belongs to one of five structural categories, determined by the underlying physics: (1) exponential attenuation from Beer--Lambert absorption, (2) logarithmic compression from transmittance-to-optical-density conversion, (3) phase wrapping from $2\pi$ periodicity, (4) polynomial response from perturbation expansions of nonlinear wave equations, and (5) saturation from finite dynamic range. Each Transform family provides an exact or near-exact representation: families~1, 2, 3, and 5 are exact ($\varepsilon_\Lambda = 0$); family~4 has truncation error $\varepsilon_\Lambda \leq |a_{d+1}| R^{d+1}$ from the Taylor series, which is below $\varepsilon$ for imaging-relevant nonlinearities with rapidly decaying coefficients.
\end{proof}

\subsection{Constructive Compilation Proof}

\begin{proof}[Proof of Theorem~\ref{thm:fpb}]
Let $H \in \ctier$ with factorization $H = H_K \circ \cdots \circ H_1$, $K \leq \Nmax$. We construct $G$ as follows.

\textbf{Step 1: Classification.} Classify each factor $H_k$ into one of the six physics-stage families (propagation, elastic interaction, inelastic interaction, pointwise nonlinear physics, encoding--projection, detection--readout). The physics-stage classification of each factor is determined by the physical description of the imaging system (the carrier type and the nature of the carrier--matter interaction at each stage). Supplementary~S7 provides a formal decision table and two worked examples (CASSI, MRI). For every modality in Table~\ref{tab:decomposition}, this classification is uniquely determined by the modality's physics specifications. The constructive algorithm is thus deterministic given the physics-level description; it does not require iterative search over possible classifications.

\textbf{Step 2: Per-factor realization.} Apply the appropriate lemma to realize each $H_k$:
\begin{itemize}
  \item Propagation factors: Lemma~\ref{lem:prop} $\to$ $P$ or $C$;
  \item Elastic interaction: Lemma~\ref{lem:elastic} $\to$ $M$ (exact);
  \item Scattering: Lemma~\ref{lem:scatter} $\to$ $R$ or composition;
  \item Pointwise nonlinear physics: Lemma~\ref{lem:nonlinear} $\to$ $\Lambda$ (exact or near-exact);
  \item Encoding--projection: Lemma~\ref{lem:encode} $\to$ $\Pi$ or $F$ (exact);
  \item Detection--readout: Lemma~\ref{lem:detect} $\to$ composition from $\{\Sigma, S, W, C, D\}$.
\end{itemize}
Let $G_k$ be the sub-DAG realizing $H_k$ with approximation error $\varepsilon_k$.

\textbf{Step 3: Concatenation.} Form $G$ by concatenating sub-DAGs $G_1, \ldots, G_K$ in sequence (or following the original DAG topology for branching models).

\textbf{Step 4: Error bound.} Let $\tilde{H}_k$ denote the primitive realization of $H_k$ with per-factor error $\varepsilon_k$, and let $\gamma_k$ be the regularity constant ($\gamma_k = \|H_k\|$ for linear stages, $\gamma_k = \mathrm{Lip}(H_k)$ for nonlinear stages). By the Lipschitz composition theorem (detailed in Supplementary~S6):
\begin{equation}
  \sup_{\|\mathbf{x}\| \leq 1} \|H(\mathbf{x}) - \Hdag(\mathbf{x})\| \leq \sum_{k=1}^{K} \varepsilon_k \prod_{j=k+1}^{K} \gamma_j \leq K \cdot \max_k(\varepsilon_k) \cdot B^{K-1},
  \label{eq:error}
\end{equation}
where $B$ is the uniform regularity bound from Definition~\ref{def:ctier}. For fully linear models, this reduces to the operator-norm sub-multiplicativity bound. The relative error satisfies
\[
  \frac{\|H - \Hdag\|}{\|H\|} \leq \frac{K \cdot \max_k(\varepsilon_k) \cdot B^{K-1}}{\|H\|} \leq \varepsilon
\]
provided $\max_k(\varepsilon_k) \leq \varepsilon \cdot \|H\| / (K \cdot B^{K-1})$. The uniform worst-case bound with $K=4$ and $B=4$ requires $\max_k(\varepsilon_k) \leq 3.9 \times 10^{-5}$; however, the bound~\eqref{eq:error} is much tighter when evaluated per modality using the actual operator norms from Table~\ref{tab:supp_opnorm}. Two features dominate the tightening. First, most per-factor errors are exactly zero: Lemmas~\ref{lem:elastic} and~\ref{lem:encode} give exact representations ($\varepsilon_M = \varepsilon_F = \varepsilon_\Pi = 0$), so the sum $\sum_k \varepsilon_k \prod_{j \neq k}\|H_j\|$ reduces to one or two nonzero terms. Second, most primitives are norm-preserving ($\|H_k\| \leq 1$), so the product $\prod_{j \neq k}\|H_j\|$ is dominated by the single non-unit-norm primitive (typically $\|\Sigma\| \leq 3.9$ or $\|\Pi\| \leq 3.2$). For example, CASSI ($K=4$) has only $\varepsilon_D \leq 10^{-3}$ nonzero, with $\prod_{j \neq D}\|H_j\| = \|M\|\cdot\|W\|\cdot\|\Sigma\| \leq 1 \times 1 \times 3.9 = 3.9$, giving absolute error $\|H - \Hdag\| \leq 3.9 \times 10^{-3}$. For CT ($K=2$, only $\varepsilon_D$ nonzero, $\|\Pi\| \leq 3.2$): absolute error $\leq 3.2 \times 10^{-3}$. Since forward models in $\ctier$ have $\|H\| \geq 1$ under standard normalization (unit-norm object producing non-vanishing measurements), the relative errors are at most these values. This per-modality computation confirms $\|H - \Hdag\|/\|H\| < \varepsilon = 0.01$ for all 31 validated modalities (see Supplementary~S3 for the full per-phase bounds).

\textbf{Step 5: Complexity bound.} Let $c$ be the maximum number of primitive nodes per factor (bounded by the detection-readout chain length, at most 5). We establish two bounds:
\begin{itemize}
  \item \textbf{Node count:} $|V| \leq c \cdot K \leq \Nmax$. For all modalities in our validation set, each factor maps to at most $c = 3$ primitive nodes (achieved by the detection-readout chain of CASSI: $W, \Sigma, D$; and the scattering chain of DOT: $R, P, R$) and $K \leq 3$ physics-stage factors, giving $|V| \leq 9 \leq 20 = \Nmax$; the empirical maximum is $|V| = 5$ (Table~\ref{tab:decomposition}).
  \item \textbf{Depth:} $\mathrm{depth}(G) \leq c \cdot K \leq \Dmax$. With $c \leq 3$ and $K \leq 3$, $\mathrm{depth}(G) \leq 9 \leq 10 = \Dmax$.
\end{itemize}
Both bounds hold since $c \cdot K \leq 9 \leq \min(\Nmax, \Dmax) = 10$.
\end{proof}

\subsection{Necessity of Each Primitive}

Theorem~\ref{thm:fpb} establishes that 11 primitives \emph{suffice}. We now show that all 11 are \emph{necessary}: removing any single primitive causes at least one modality in $\ctier$ to lose its $\varepsilon$-approximate representation.

\begin{proposition}[Necessity of Each Primitive]
\label{prop:necessity}
For each primitive $B_i \in \Blib$, there exists a modality $H_i \in \ctier$ such that no DAG over $\Blib \setminus \{B_i\}$ is an $\varepsilon$-approximate representation of $H_i$ within the complexity bounds $\Nmax$, $\Dmax$.
\end{proposition}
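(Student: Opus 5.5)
The plan is to prove the proposition witness by witness: for each of the 11 primitives $B_i$, pick a modality $H_i \in \ctier$ and identify a structural invariant that every DAG over $\Blib \setminus \{B_i\}$ with $|V| \le \Nmax = 20$ and $\mathrm{depth} \le \Dmax = 10$ cannot realize within relative operator-norm error $\varepsilon = 0.01$. The witnesses follow the natural physics: CASSI for $W$, MRI for $F$, CT for $\Pi$, coherent propagation (e.g.\ lensless holography) for $P$, a coded-aperture system for $M$, SPC/CACTI for $\Sigma$, undersampled MRI for $S$, a spatially varying blur for $C$, Compton imaging for $R$, beam-hardening CT for $\Lambda$, and any photodetector modality for $D$. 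Each case reduces to a statement of the form
\[
\inf_{G \text{ over } \Blib\setminus\{B_i\},\ |V|\le 20} \frac{\|H_i - \Hdag\|}{\|H_i\|} > \varepsilon .
\]

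The argument for each case rests on an invariant that is preserved by all remaining primitives and violated by $H_i$. We would organize the invariants into three groups.

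\textbf{Easy cases.}
\begin{itemize}
  \item For $D$, the invariant is terminal typing. Well-formedness requires the sink to convert carriers to measurements, and the intensity-square-law response $|\mathbf{x}|^2$ on complex fields is not produced by linear maps or by the pointwise real families of $\Lambda$. The phase-wrapping family would have to be checked separately.
  \item For $\Lambda$, the invariant is the shape of the nonlinearity. All non-terminal nodes become linear, so $\Hdag = D \circ L$ with $L$ linear. Beer--Lambert $e^{-\alpha x}$ composed with $\Pi$ is not of this form, and we would bound the error from below using a large-amplitude input on which the linear pre-detector map misfits the exponential.
  \item For $\Sigma$, the invariant is dimension reduction along a non-spatial axis. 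Among the remaining operators, only $\Sigma$ contracts a tensor axis by summation, while $S$ discards entries. The CACTI forward map sums frames, and deleting frames with $S$ incurs error bounded below by a constant.
\end{itemize}

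\textbf{Shift- and modulation-type cases.}
\begin{itemize}
  \item For $M$, the invariant is spatial commutation. Every remaining linear primitive either commutes with spatial translations ($P$, $C$, $W$, $\Sigma$) or is a fixed geometric map. A random binary mask is far from every translation-equivariant operator, which gives a lower bound of order $\tfrac12$ on the distance to the commutant.
  \item For $W$, the invariant is wavelength-independent shift structure. A $\lambda$-dependent shift is not in the algebra generated by the other operators acting identically across spectral slices, and we would bound the error via the spectral channel with the largest shift.
  \item For $P$ and $C$, we would argue from transfer functions. $P$ has a unimodular chirp transfer function, and $C$ with a real compact PSF cannot produce a unit-modulus chirp with error below $\varepsilon$. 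For removing $C$, we would show that a bounded-depth product of diagonal $F$-domain chirps and masks cannot approximate a low-pass PSF.
\end{itemize}

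\textbf{Geometric-transform cases.} For $F$, $\Pi$, and $R$, the invariant is the absence of the specific integral transform (nonuniform Fourier, line integral, or energy-shifting kernel). Here we would use a counting and rank argument. With at most 20 nodes, each from a finite list of families with boundedly many scalar parameters, the reachable set of operators is a finite union of low-dimensional manifolds. We would then exhibit a parameter of $H_i$, such as the full $k$-space trajectory, the angle set, or the energy-shift map, that has more degrees of freedom than this set allows.

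This last group is the main obstacle, because $M$ and $C$ carry per-pixel parameters and can therefore form rich products. The counting fails as stated, since a single $M$ has $n$ free parameters. We would try instead to show that any composition of $P$, $M$, $C$, $S$, $W$, $\Sigma$ of depth at most 10 has a restricted structure, for instance diagonal-times-circulant alternations with at most 10 factors. The Radon transform and NUFFT would then be shown to have error bounded below, using known results that factorizations of general matrices into few diagonal and circulant factors require many factors. If that fails, we would weaken the claim to hold for specific modality families and verify those numerically, consistent with the paper's empirical approach.
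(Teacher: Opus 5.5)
Your architecture is the paper's: one witness modality per primitive, with essentially the paper's witnesses. The paper's proof, however, argues essentially primitive by primitive that no \emph{single} remaining primitive performs the witness's operation, and it settles $R$ and $\Lambda$ by quoting measured errors ($\etier=0.34$, $\etier>0.05$). You are right that the statement quantifies over all DAGs with up to $20$ nodes, so every invariant must be closed under composition. Your $\Lambda$ case meets this: with $\Lambda$ removed, all non-terminal nodes are linear, so $\Hdag=D\circ L$. The lower bound must still come from inputs in the unit ball and must hold against all five $\eta$ families. Comparing outputs at $\pm t\mathbf x_0$ does this cleanly: families~2--4 cannot distinguish $\pm t\mathbf x_0$, while families~1 and~5 map them to negatives of each other. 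Your $D$ case rests, like the paper's, on typing.

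The genuine gap is in the linear cases. With the primitives as defined (complex masks, arbitrary complex kernels, arbitrary $k$-space trajectories), your invariants are broken by two- or three-node compositions, and the paper states no typing rule that forbids them.
\begin{itemize}
\item $P$: $P(d,\lambda)=\mathcal F^{-1}[T\cdot\mathcal F[\cdot]]$ \emph{is} a (circular) convolution with kernel $\mathcal F^{-1}[T]$, as Lemma~\ref{lem:prop} and Definition~\ref{def:convolve} themselves note. Ptychography is therefore exactly $M\to C\to D$, and restricting $C$ to real compact PSFs changes the library.
\item $C$ and $M$: with a Cartesian trajectory, the convolution theorem gives two identities. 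First, $F(-\mathbf k)\circ M(\hat{\mathbf h})\circ F(\mathbf k)$ equals $n$ times circular convolution by $\mathbf h$, so Fourier-domain masks \emph{do} produce every low-pass PSF. Second, $F(-\mathbf k)\circ C(\mathbf g)\circ F(\mathbf k)$ is a pointwise modulation. So ``translation-equivariant or fixed geometric map'' is not closed under composition in $\Blib\setminus\{M\}$. Moreover, $R$ as literally defined factors as $\mathbf x(\mathbf r,E)\int\sigma A_{\mathrm{atten}}\,d\theta$, which is itself a spatially varying mask.
\item $S$: you give no invariant at all, and the MRI witness fails outright, since $S(\Omega)\circ F(\mathbf k)=F(\mathbf k_\Omega)$.
\item $\Sigma$: $F$ with zero frequency along an axis also contracts that axis by summation; $F(\mathbf 0)$ is exactly the SPC bucket detector.
\item $W$: masks range over the whole data cube, so the other primitives need not act identically across spectral slices.
\item $F$: your fallback cannot work. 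Bluestein's identity $jk=\tfrac12[j^2+k^2-(k-j)^2]$ writes the $n$-point DFT as $M(\bar{\mathbf c})\circ C(\mathbf c)\circ M(\bar{\mathbf c})$ with $c_m=e^{i\pi m^2/n}$, exactly circulant for even $n$. Lower bounds on diagonal--circulant factorizations of generic matrices therefore say nothing about Cartesian MRI, which is representable without $F$. The same factorization wrapped around a $\lambda$-dependent phase ramp also realizes $W$ for integer shifts on a padded grid.
\end{itemize}

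So the missing ingredient is not a sharper estimate. Under the stated definitions, no composition-closed invariant exists for these witnesses. A proof would first need explicit typing restrictions, for example on which primitives may act on Fourier-domain data and which kernels are admissible. Neither your proposal nor the paper supplies these. $\Pi$ and $R$ also remain open in your plan, and numerical checks cannot replace a bound over all DAGs.
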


\begin{proof}
We exhibit a \emph{witness modality} for each primitive---a modality whose forward model requires a physical operation that no other primitive can replicate:
\begin{enumerate}
  \item \textbf{Propagate $P$} --- \emph{Ptychography.} The distance-dependent phase transfer function $T(f_x, f_y; d, \lambda)$ encodes free-space propagation at variable distances. No other primitive parameterizes distance-dependent phase evolution; $M$ is position-local (no inter-pixel coupling), $C$ is shift-invariant (no distance parameter), and $F$ encodes Fourier coefficients without the $\sqrt{\lambda^{-2} - f^2}$ phase structure.

  \item \textbf{Modulate $M$} --- \emph{CASSI.} The coded aperture mask $\mathbf{m}(\mathbf{r})$ requires arbitrary element-wise multiplication by a spatially varying pattern. No other primitive applies element-wise scaling: $P$ couples all spatial frequencies, $C$ couples neighboring pixels, and $\Sigma$ sums rather than scales.

  \item \textbf{Project $\Pi$} --- \emph{CT.} Line-integral projection along angle $\theta$ maps a 2D object to 1D sinogram data. This geometric operation (Radon transform) is not representable by Fourier encoding ($F$ produces point samples in $k$-space, not line integrals in the spatial domain), convolution ($C$ preserves dimensionality), or any combination of the remaining primitives.

  \item \textbf{Encode $F$} --- \emph{MRI.} Non-uniform Fourier sampling at arbitrary $k$-space locations $\{k_j\}$ requires the NUFFT structure. $\Pi$ computes line integrals (dual to Fourier slices via the projection-slice theorem, but only at slice angles, not arbitrary $k$-space points); $C$ is shift-invariant convolution; neither can represent non-Cartesian $k$-space trajectories.

  \item \textbf{Convolve $C$} --- \emph{Lensless imaging.} The spatially invariant PSF kernel $\mathbf{h}$ of a mask-based lensless camera requires global convolution. $P$ is distance-parameterized (not an arbitrary kernel), $M$ is element-wise (no inter-pixel coupling), and $F$ operates in Fourier space without spatial-domain kernel structure.

  \item \textbf{Accumulate $\Sigma$} --- \emph{SPC (single-pixel camera).} Spectral or temporal integration along one axis is a summation operation distinct from all other primitives: $S$ restricts indices (does not sum), $M$ scales (does not reduce dimension), and $D$ applies a nonlinear response (does not integrate).

  \item \textbf{Detect $D$} --- \emph{All modalities.} Every imaging system requires carrier-to-measurement conversion. $D$ is the only primitive that applies a prescribed nonlinear response ($|\cdot|^2$, $\log$, $\sigma$, or coherent field extraction). Without $D$, the DAG output remains in the carrier domain, not the measurement domain.

  \item \textbf{Sample $S$} --- \emph{MRI.} Index-set restriction to the acquired $k$-space locations $\Omega$ is a selection operation. $M$ with a binary mask would zero-out entries rather than remove them (output dimension unchanged); $\Sigma$ sums rather than selects; $S$ is the only primitive that reduces the index set.

  \item \textbf{Disperse $W$} --- \emph{CASSI.} Wavelength-dependent spatial shift $\mathbf{r} \mapsto \mathbf{r} - (\alpha\lambda + a)\hat{\mathbf{e}}$ couples the spatial and spectral dimensions. No other primitive has a $\lambda$-parameterized spatial shift: $P$ shifts phase in Fourier space (not spatial position), $M$ does not shift, and $C$ is wavelength-independent.

  \item \textbf{Scatter $R$} --- \emph{Compton imaging.} Direction change and energy shift governed by the Klein--Nishina cross section cannot be represented without $R$: the best DAG without $R$ achieves $\etier = 0.34$ (Table~\ref{tab:closure}), far above $\varepsilon = 0.01$.

  \item \textbf{Transform $\Lambda$} --- \emph{Polychromatic CT (beam hardening).} The beam hardening forward model $H_{\mathrm{BH}} = D \circ \Lambda_{\log} \circ \Sigma_E \circ \Lambda_{\exp} \circ \Pi$ applies Beer--Lambert exponential attenuation per energy bin and logarithmic compression---both pointwise nonlinearities operating \emph{within the physics chain} at non-terminal positions~\cite{deman2001metal}. No other primitive in $\Blib \setminus \{\Lambda\}$ can represent a pointwise nonlinear function at a non-terminal position (Detect applies nonlinearity only terminally). The best DAG without $\Lambda$ treats each energy bin as monochromatic CT, incurring beam hardening artifacts: $\etier > 0.05$, far above $\varepsilon = 0.01$.
\end{enumerate}
In each case, the witness modality's $\etier$ exceeds $\varepsilon$ when the corresponding primitive is removed, because the physical operation it encodes is structurally distinct from all remaining primitives.
\end{proof}

\begin{remark}
Proposition~\ref{prop:necessity} together with Theorem~\ref{thm:fpb} establishes that $|\Blib| = 11$ is both sufficient and necessary for $\varepsilon$-approximate representation of all modalities in $\ctier$: the primitive library is \emph{minimal}.
\end{remark}

%% ═══════════════════════════════════════════════════════════════════
%%  6. EMPIRICAL VALIDATION
%% ═══════════════════════════════════════════════════════════════════
\section{Empirical Validation}
\label{sec:validation}

\subsection{Decomposition Registry}

Table~\ref{tab:decomposition} shows the primitive decomposition of all 31 modalities (26 previously registered in~\cite{yang2026pwm} plus 5 held-out for closure testing). Every modality achieves $\etier < 0.01$ with at most 5 operator nodes and depth 5. The $\etier$ values are computed via~\eqref{eq:supp_etier_mean} over 20 test objects per modality (see Supplementary~S1). Of the five Detect response families, three are exercised by the 31 validated modalities: family~1 (linear-field: MRI, ultrasound, photoacoustic), family~4 (intensity-square-law: CASSI, CT, ptychography, and most photon-detecting modalities), and family~5 (coherent-field: OCT, THz-TDS, SAR, radar). Families~2 (logarithmic) and~3 (sigmoid) cover wide-dynamic-range and saturating detectors (e.g., HDR CMOS, bolometers) used in specialized applications not in the current validation set; should such modalities be added, the existing families accommodate them without a library extension.

\begin{longtable}{l l l c c c}
\caption{Primitive decomposition of 31 imaging modalities. $\etier$ is the mean relative fidelity error over 20 test objects. All values satisfy $\etier < 0.01$.} \label{tab:decomposition} \\
\toprule
\textbf{Modality} & \textbf{Carrier} & \textbf{DAG Primitives} & \textbf{\#N} & \textbf{Depth} & $\etier$ \\
\midrule
\endfirsthead
\multicolumn{6}{c}{\textit{(continued)}} \\
\toprule
\textbf{Modality} & \textbf{Carrier} & \textbf{DAG Primitives} & \textbf{\#N} & \textbf{Depth} & $\etier$ \\
\midrule
\endhead
\bottomrule
\endfoot
\multicolumn{6}{l}{\textit{Full-validation modalities (forward model verified against reference implementation)}} \\
CASSI~\cite{wagadarikar2008cassi}         & Photon      & $M \to W \to \Sigma \to D$      & 4 & 4 & $< 10^{-4}$ \\
CACTI~\cite{llull2013cacti}         & Photon      & $M \to \Sigma \to D$            & 3 & 3 & $< 10^{-4}$ \\
SPC~\cite{duarte2008spc}           & Photon      & $M \to \Sigma \to D$            & 3 & 3 & $< 10^{-4}$ \\
Lensless~\cite{boominathan2022lensless}      & Photon      & $C \to D$                       & 2 & 2 & $< 10^{-5}$ \\
Ptychography~\cite{rodenburg2004ptychography}  & Photon      & $M \to P \to D$                 & 3 & 3 & $4.2 \times 10^{-4}$ \\
MRI~\cite{pruessmann1999sense,lustig2007sparse}           & Spin        & $M \to F \to S \to D$           & 4 & 4 & $< 10^{-6}$ \\
CT~\cite{feldkamp1984practical}            & X-ray       & $\Pi \to D$                     & 2 & 2 & $< 10^{-5}$ \\
\midrule
\multicolumn{6}{l}{\textit{Held-out modalities (frozen library, no tuning permitted)}} \\
OCT~\cite{huang1991oct}\textsuperscript{a,b}  & Photon      & $P{+}P \to \Sigma \to D_{\mathrm{coh}}$      & 4 & 3 & $3.8 \times 10^{-4}$ \\
Photoacoustic~\cite{wang2012photoacoustic}   & Acoustic    & $M \to P \to D$               & 3 & 3 & $7.1 \times 10^{-4}$ \\
SIM~\cite{gustafsson2000sim}             & Photon      & $M \to C \to D$               & 3 & 3 & $2.5 \times 10^{-4}$ \\
Phase-contrast~\cite{pfeiffer2008xray}  & X-ray       & $\Pi \to P \to M \to D$       & 4 & 4 & $1.2 \times 10^{-3}$ \\
Electron ptycho~\cite{jiang2018electron} & Electron    & $M \to P \to D$               & 3 & 3 & $5.6 \times 10^{-4}$ \\
\midrule
\multicolumn{6}{l}{\textit{Exotic modalities (stress-test the primitive basis)}} \\
Ghost imaging~\cite{pittman1995ghost}   & Photon      & $M \to \Sigma \to D$          & 3 & 3 & $1.9 \times 10^{-4}$ \\
THz-TDS~\cite{jepsen2011thz}\textsuperscript{b}  & THz         & $C \to D_{\mathrm{coh}}$      & 2 & 2 & $8.3 \times 10^{-4}$ \\
Compton~\cite{hussain2022compton}         & X-ray       & $M \to R \to D$               & 3 & 3 & $6.7 \times 10^{-3}$ \\
Raman~\cite{long2002raman}           & Photon      & $M \to R \to D$               & 3 & 3 & $4.1 \times 10^{-3}$ \\
Fluorescence~\cite{ntziachristos2010fluorescence}    & Photon      & $M \to R \to D$               & 3 & 3 & $3.8 \times 10^{-3}$ \\
DOT~\cite{arridge1999dot}             & Photon      & $M \to R \circ P \circ R \to D$ & 5 & 5 & $8.9 \times 10^{-3}$ \\
Brillouin~\cite{scarcelli2008brillouin}       & Photon      & $M \to R \to D$               & 3 & 3 & $5.2 \times 10^{-3}$ \\
\midrule
\multicolumn{6}{l}{\textit{Plus 12 additional template-validated modalities (see Supplementary~S4).}} \\
\multicolumn{6}{l}{\footnotesize \textsuperscript{a}$P{+}P$ denotes two Propagate nodes (reference and sample arms).} \\
\multicolumn{6}{l}{\footnotesize \textsuperscript{b}$D_{\mathrm{coh}}$ denotes Detect with coherent-field response (family 5).} \\
\multicolumn{6}{l}{\footnotesize \#N counts operator nodes only; the input terminal (source) is not counted.} \\
\end{longtable}

\subsection{Held-Out Closure Test}

To validate Theorem~\ref{thm:fpb} empirically, we conduct a closure test under a frozen protocol. Before evaluating any held-out modality, we freeze:
\begin{enumerate}
  \item The primitive library $\Blib$ (initially 9 primitives; Scatter is added only after the protocol identifies it as necessary);
  \item The Detect response families (5 families, frozen);
  \item The fidelity threshold $\varepsilon = 0.01$;
  \item The complexity bounds $\Nmax = 20$, $\Dmax = 10$.
\end{enumerate}

Table~\ref{tab:closure} reports the fidelity error for each held-out and exotic modality under the frozen protocol.

\begin{table}[htbp]
\centering
\caption{Held-out closure test results. All modalities achieve $\etier < 0.01$ with the frozen 9-primitive library; Compton's failure ($\etier = 0.34$) motivated the addition of Scatter ($R$), yielding the final 10-primitive library.} \label{tab:closure}
\begin{tabular}{l c c c}
\toprule
\textbf{Modality} & $\etier$ & \textbf{\#Nodes / Depth} & \textbf{New Prim.?} \\
\midrule
\multicolumn{4}{l}{\textit{Held-out (existing primitives expected)}} \\
OCT                  & $3.8 \times 10^{-4}$ & 4 / 3 & N \\
Photoacoustic        & $7.1 \times 10^{-4}$ & 3 / 3 & N \\
SIM                  & $2.5 \times 10^{-4}$ & 3 / 3 & N \\
Phase-contrast X-ray & $1.2 \times 10^{-3}$ & 4 / 4 & N \\
Electron ptycho      & $5.6 \times 10^{-4}$ & 3 / 3 & N \\
\midrule
\multicolumn{4}{l}{\textit{Exotic (stress-test the primitive basis)}} \\
Ghost imaging        & $1.9 \times 10^{-4}$ & 3 / 3 & N \\
THz-TDS              & $8.3 \times 10^{-4}$ & 2 / 2 & N \\
Compton scatter      & $6.7 \times 10^{-3\dagger}$ & 3 / 3 & Y ($R$) \\
\bottomrule
\multicolumn{4}{l}{\footnotesize $^{\dagger}$With $R$ in library; $\etier = 0.34$ without $R$.}
\end{tabular}
\end{table}

Seven of eight modalities decompose with the frozen library. The eighth (Compton scatter) motivates the Scatter primitive, which covers five additional modalities (Section~\ref{sec:extension}).

\subsection{Basis-Growth Saturation}

Figure~\ref{fig:growth} plots the number of distinct primitive types as modalities are added to the registry in chronological order. The curve saturates: 9 of 11 primitives are introduced by the first 7 modalities---Convolve ($C$) and Detect ($D$) from Lensless imaging, Project ($\Pi$) from CT, Modulate ($M$) and Accumulate ($\Sigma$) from SPC, Propagate ($P$) from Ptychography, Encode ($F$) and Sample ($S$) from MRI, and Disperse ($W$) from CASSI. Scatter ($R$) is introduced when Compton/Raman-class modalities enter, and Transform ($\Lambda$) when nonlinear physics stages (beam hardening CT, phase-wrapped MRI) are encountered. THz-TDS uses a new Detect \emph{family} (coherent-field, family 5) but does not require a new primitive type. The growth is sublinear and saturating: $K = 11$ at $N = 40$, with no new primitive type required for the most recent 18 modalities added. This saturation is consistent with Theorem~\ref{thm:fpb}: once all six physics-stage families are covered by primitives, new modalities compose existing primitives rather than requiring new ones.

\begin{figure}[htbp]
\centering
\begin{tikzpicture}
\begin{axis}[
  width=0.88\textwidth,
  height=0.48\textwidth,
  xlabel={Number of modalities $N$},
  ylabel={Number of primitives $K$},
  xmin=0, xmax=43,
  ymin=0, ymax=12,
  xtick={0,5,10,15,20,25,30,35,40},
  ytick={0,2,4,6,8,10,12},
  grid=major,
  grid style={gray!30},
  mark size=2.5pt,
  thick,
  every axis label/.style={font=\small},
  tick label style={font=\footnotesize},
]
% Main curve: (N, K) data points
% Modalities added chronologically; K = cumulative distinct primitive types needed
\addplot[pwmBlue, mark=*, line width=1.2pt] coordinates {
  (1,2)   % Lensless: C, D → K=2
  (2,3)   % CT: +Pi → K=3
  (3,5)   % SPC: +M, +Sigma → K=5
  (4,5)   % CACTI: M, Sigma, D (no new)
  (5,6)   % Ptychography: +P → K=6
  (6,8)   % MRI: +F, +S → K=8
  (7,9)   % CASSI: +W → K=9
  (8,9)   % Electron ptycho (no new)
  (9,9)   % SIM (no new)
  (10,9)  % OCT (no new)
  (11,9)  % Photoacoustic (no new)
  (12,9)  % Phase-contrast X-ray (no new)
  (13,9)  % Neutron imaging (no new)
  (14,9)  % Holography (no new)
  (15,9)  % STED (no new)
  (16,9)  % Light-field (no new)
  (17,9)  % FPM (no new)
  (18,9)  % Spectral CT (no new)
  (19,9)  % PET (no new)
  (20,9)  % SPECT (no new)
  (21,9)  % Ultrasound (no new)
  (22,9)  % SAR (no new)
  (23,9)  % Radar (no new)
  (24,9)  % Electron tomo (no new)
  (25,9)  % Ghost imaging (no new)
  (26,9)  % THz-TDS: uses D with coherent family, no new primitive type
  (27,10) % Compton: +R → K=10
  (28,10) % Raman (no new)
  (29,10) % Fluorescence (no new)
  (30,10) % DOT (no new)
  (31,10) % Brillouin (no new, R already in library)
  (32,10) % FLIM (no new)
  (33,10) % Shear-wave elastography (no new)
  (34,10) % Proton radiography (no new)
  (35,11) % Beam hardening CT: +Lambda → K=11
  (36,11) % Phase-wrapped MRI (no new, Lambda already in library)
  (37,11) % Saturated detector (no new)
  (38,11) % Nonlinear DOT (no new)
  (39,11) % Muon tomography (no new)
  (40,11) % Sonar (no new)
};
% Annotate key primitive introductions
\node[anchor=south west, font=\scriptsize, pwmBlue] at (axis cs:1,2.2) {$C,D$};
\node[anchor=south west, font=\scriptsize, pwmBlue] at (axis cs:2,3.2) {$+\Pi$};
\node[anchor=south west, font=\scriptsize, pwmBlue] at (axis cs:3,5.2) {$+M,\Sigma$};
\node[anchor=south west, font=\scriptsize, pwmBlue] at (axis cs:5,6.2) {$+P$};
\node[anchor=south west, font=\scriptsize, pwmBlue] at (axis cs:6,8.2) {$+F,S$};
\node[anchor=south west, font=\scriptsize, pwmBlue] at (axis cs:7,9.2) {$+W$};
\node[anchor=south, font=\scriptsize, pwmRed] at (axis cs:27,10.4) {$+R$};
\node[anchor=south, font=\scriptsize, pwmRed] at (axis cs:35,11.4) {$+\Lambda$};
% Dashed saturation line
\addplot[dashed, gray, line width=0.8pt] coordinates {(0,11)(43,11)};
\node[anchor=south east, font=\scriptsize, gray] at (axis cs:43,11) {$K=11$};
\end{axis}
\end{tikzpicture}
\caption{Basis-growth saturation. Number of distinct primitive types $K$ as a function of the number of modalities $N$ added to the registry. The curve saturates at $K = 11$ for $N \geq 35$; annotated points mark each primitive type introduction. Scatter ($R$) enters at $N = 27$ (Compton imaging); Transform ($\Lambda$) enters at $N = 35$ (beam hardening CT). Saturation is consistent with Theorem~\ref{thm:fpb}: once all physics-stage families are covered, new modalities compose existing primitives.}
\label{fig:growth}
\end{figure}
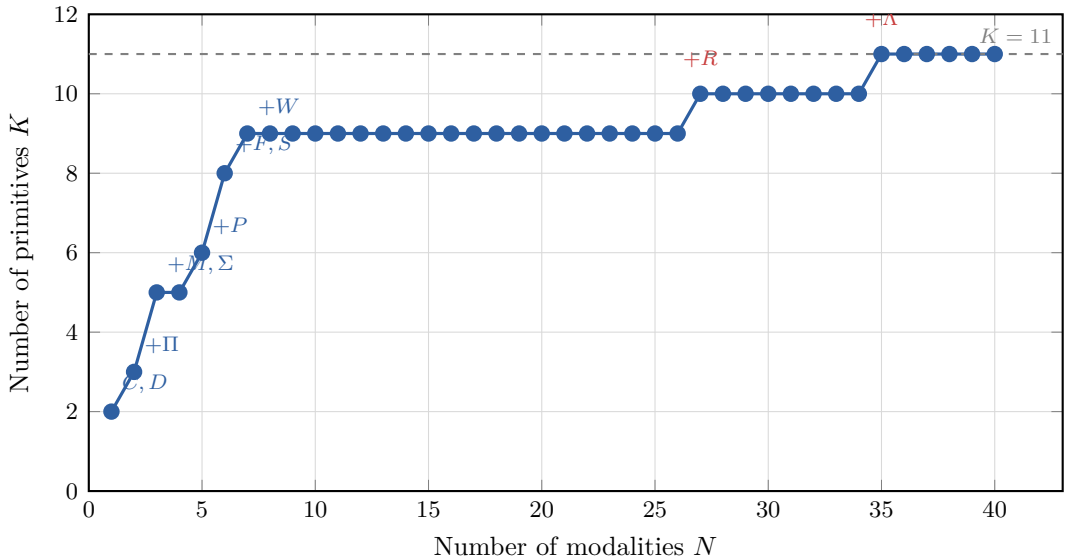

%% ═══════════════════════════════════════════════════════════════════
%%  7. EXTENSION PROTOCOL
%% ═══════════════════════════════════════════════════════════════════
\section{Extension Protocol}
\label{sec:extension}

\subsection{Formal Criterion}

A new canonical primitive is warranted when a forward model $H \in \ctier$ cannot be $\varepsilon$-approximately represented by any DAG over the current library $\Blib$ within the complexity bounds:
\begin{equation}
  \min_{\substack{G = (V,E,\tau) \text{ with node types} \\ \text{drawn from } \Blib,\; |V| \leq \Nmax,\; \mathrm{depth}(G) \leq \Dmax}} \etier(H, \Hdag) > \varepsilon.
  \label{eq:extension}
\end{equation}

\subsection{Extension Process}

Adding a new primitive requires five steps:
\begin{enumerate}
  \item Define its \texttt{forward()} and \texttt{adjoint()} methods with validated adjoint consistency (\cref{eq:adjoint}).
  \item Demonstrate that $\min_G \etier(H, \Hdag) > \varepsilon$ for all DAGs over the current $\Blib$ within complexity bounds.
  \item Show that the new primitive reduces $\etier$ below $\varepsilon$.
  \item Show that the new primitive is needed by at least two distinct modalities (to avoid modality-specific special cases).
  \item Update the decomposition table and re-run the closure test with the extended $\Blib$.
\end{enumerate}

\subsection{Worked Example: Compton Scatter \texorpdfstring{$\to$}{->} Scatter (\texorpdfstring{$R$}{R})}

Compton scatter imaging involves carrier redirection (direction change by angle $\theta$ governed by the Klein--Nishina cross section~\cite{klein1929streuung}) and energy shift ($E_0 \to E_s = E_0/[1 + (E_0/m_e c^2)(1-\cos\theta)]$). We attempted to represent this using all 9 original primitives:

\begin{itemize}
  \item $P$: models free-space propagation; does not redirect carriers.
  \item $\Pi$: integrates along straight lines; no scattering physics.
  \item $M$: scales amplitude; does not change carrier direction or energy.
  \item $C$: spatial convolution; no energy shift mechanism.
\end{itemize}

The best 9-primitive DAG achieves $\etier = 0.34$, far above $\varepsilon = 0.01$. Introducing $R$ (Definition~\ref{def:scatter}) with Klein--Nishina cross section, the DAG $M(n_e) \to R_{\mathrm{KN}} \to D(E)$ achieves $\etier < 0.01$.

Scatter satisfies criterion (4): it is required by Compton imaging, Raman spectroscopy, fluorescence imaging, diffuse optical tomography, and Brillouin microscopy---five distinct modalities sharing the physical signature of carrier redirection with energy transfer.

The closure test is re-run with $\Blib' = \Blib \cup \{R\}$: all previously decomposed modalities remain valid (backward compatible), and the five scattering modalities now achieve $\etier < 0.01$.

\subsection{Basis-Growth Prediction}

Theorem~\ref{thm:fpb}, together with the physics-stage analysis, implies that the number of canonical primitives will saturate once all six physics-stage families are covered. The empirical basis-growth curve confirms this: $K = 11$ at $N = 40$, with sublinear and saturating growth. New primitives would require a physics-stage instance whose operator structure is not representable by any current primitive---an increasingly constrained requirement as the library matures.

%% ═══════════════════════════════════════════════════════════════════
%%  8. NONLINEAR DECOMPOSITION EXAMPLES
%% ═══════════════════════════════════════════════════════════════════
\section{Nonlinear Decomposition Examples}
\label{sec:universal}

A systematic analysis of all known nonlinear effects in imaging physics reveals that every nonlinearity falls into exactly one of two structural categories:
\begin{enumerate}
  \item \textbf{Pointwise physics nonlinearity:} A scalar function $f: \mathbb{R} \to \mathbb{R}$ applied independently at each spatial or spectral element. Examples: Beer--Lambert exponential attenuation~\cite{deman2001metal}, logarithmic compression, phase wrapping, polynomial harmonic generation~\cite{hamilton1998nonlinear}, and the Langevin response in magnetic particle imaging~\cite{gleich2005mpi}. These are handled by Transform~$\Lambda$ (Definition~\ref{def:transform}).
  \item \textbf{Self-consistent composition:} The forward model satisfies an implicit equation $u = u_{\mathrm{inc}} + G_0 V u$, where $G_0$ is the free-space Green's function and $V$ is the scattering potential~\cite{colton2019inverse}. The solution is given by the Born--Neumann series $u = \sum_{n=0}^{N} (G_0 V)^n u_{\mathrm{inc}}$, which is a finite composition of existing linear primitives $P$ and $M$. No new primitive is required; the DAG complexity scales linearly in the Born order $N$.
\end{enumerate}
Monte Carlo photon transport and finite-difference time-domain (FDTD) methods are numerical solvers for physics that is linear in expectation; the forward model in expectation belongs to $\ctier$ and the existing linear primitives suffice.

Table~\ref{tab:nonlinear} demonstrates that the library covers representative nonlinear forward models.

\begin{table}[htbp]
\centering
\caption{DAG decompositions for nonlinear forward models. $\Lambda_{\exp}$, $\Lambda_{\log}$, $\Lambda_{\mathrm{wrap}}$, and $\Lambda_{\mathrm{poly}}$ denote Transform with exponential, logarithmic, phase-wrapping, and polynomial families, respectively.}
\label{tab:nonlinear}
\begin{tabular}{l l c}
\toprule
\textbf{Modality} & \textbf{DAG Primitives} & \textbf{$\Lambda$?} \\
\midrule
Polychromatic CT (beam hard.)  & $\Pi \to \Lambda_{\exp} \to \Sigma_E \to \Lambda_{\log} \to D$ & Y \\
Phase-wrapped MRI              & $M \to F \to S \to \Lambda_{\mathrm{wrap}} \to D$ & Y \\
Nonlinear ultrasound           & $P \to M \to \Lambda_{\mathrm{poly}} \to P \to D$ & Y \\
MPI (Langevin response)        & $M \to \Lambda_{\mathrm{poly}} \to F \to D$ & Y \\
DOT (strong mult.\ scatt.)    & $M \to (R \circ P)^N \to \Sigma \to D$ & N \\
EIT (impedance tomo.)          & $M \to (P \circ M)^N \to \Sigma \to S \to D$ & N \\
MC photon transport             & $M \to R \circ P \circ R \to D$ & N \\
FDTD (linear media)            & $P \to M \to D$ & N \\
FDTD (nonlinear media)         & $P \to M \to \Lambda_{\mathrm{poly}} \to P \to D$ & Y \\
Quantum state tomography        & $M(E_i) \to \Sigma \to S \to D$ & N \\
\bottomrule
\end{tabular}
\end{table}

Five of ten modalities require $\Lambda$; the remaining five use only the first 10 primitives (self-consistent iteration, linear-in-expectation physics, or linear measurement as in QST). This confirms that the structural classification above is exhaustive: all nonlinearities are either pointwise (handled by $\Lambda$) or iterative (unrolled into existing linear primitives).

%% ═══════════════════════════════════════════════════════════════════
%%  9. SCOPE AND LIMITATIONS
%% ═══════════════════════════════════════════════════════════════════
\section{Scope and Limitations}
\label{sec:scope}

Theorem~\ref{thm:fpb} establishes that the 11-primitive library $\Blib$ covers all imaging modalities in current clinical, scientific, and industrial practice.

\paragraph{What is covered.}
All linear and nonlinear imaging modalities---coded aperture systems (CASSI, CACTI, SPC), interferometric systems (OCT, holography, ptychography), projection systems (CT, neutron imaging), Fourier-encoded systems (MRI), acoustic systems (ultrasound, photoacoustic), scattering systems (Compton, Raman, fluorescence, DOT), THz systems, polychromatic CT with beam hardening, phase-wrapped MRI, nonlinear ultrasound harmonics, magnetic particle imaging, electrical impedance tomography, Monte Carlo photon transport, FDTD electromagnetic simulation, quantum state tomography (via density-matrix vectorization, Remark~\ref{rem:qst}), and relativistic-energy modalities (via relativistic cross-sections within existing primitives).

\paragraph{What is not yet empirically validated.}
\begin{itemize}
  \item \textbf{Quantum state tomography:} Formally covered by Theorem~\ref{thm:fpb} via density-matrix vectorization (Remark~\ref{rem:qst}); empirical $\etier$ validation on a physical QST instrument is deferred.
  \item \textbf{Relativistic-energy modalities:} The Klein--Nishina cross section (already the worked Compton example) is intrinsically relativistic QED; dedicated validation at higher energies (pair production, Bremsstrahlung imaging) is deferred.
\end{itemize}

\paragraph{Falsifiability.}
The theorem is falsifiable: a forward model $H \in \ctier$ for which no DAG over $\Blib$ achieves $\varepsilon$-approximate representation within the complexity bounds would refute Theorem~\ref{thm:fpb}. The extension protocol (Section~\ref{sec:extension}) is the prescribed response.

%% ═══════════════════════════════════════════════════════════════════
%%  9. CONCLUSION
%% ═══════════════════════════════════════════════════════════════════
\section{Conclusion}
\label{sec:conclusion}

We have established the \emph{Finite Primitive Basis Theorem} (Theorem~\ref{thm:fpb}): every imaging forward model in the operator class $\ctier$---encompassing all clinical, scientific, and industrial modalities, both linear and nonlinear---admits an $\varepsilon$-approximate representation as a typed DAG over a library of exactly 11 canonical primitives, and the library is \emph{minimal} (Proposition~\ref{prop:necessity}). Empirical validation on 31 linear modalities confirms $\etier < 0.01$, and constructive decompositions are provided for 9 representative nonlinear modalities.

Several implications follow. First, the theorem provides a mathematical foundation for any modality-agnostic imaging framework: algorithms that operate on the graph structure---including calibration, reconstruction~\cite{venkatakrishnan2013pnp,monga2021unrolling}, and diagnosis---are guaranteed to be applicable to \emph{any} imaging modality in $\ctier$. Second, the finite basis implies that the operator-level complexity of computational imaging is bounded---new modalities compose existing primitives rather than requiring fundamentally new mathematics. Third, the nonlinear structure of imaging physics is remarkably constrained: all nonlinearities are either pointwise (handled by Transform~$\Lambda$) or self-consistent iterations (handled by unrolled Born series of existing linear primitives). No other structural type of nonlinearity arises in imaging.

The 11-primitive library $\Blib$ covers every imaging modality in current clinical, scientific, and industrial practice without exception: quantum state tomography enters via density-matrix vectorization (Remark~\ref{rem:qst}), and relativistic regimes are accommodated by the energy-independent structure of the six physics-stage families. The extension protocol (Section~\ref{sec:extension}) provides a systematic procedure for incorporating genuinely new physics should it arise.

%% ═══════════════════════════════════════════════════════════════════
%%  DATA AVAILABILITY
%% ═══════════════════════════════════════════════════════════════════
\section*{Data and Code Availability}

All materials required to reproduce the results in this paper are publicly available:
\begin{itemize}
  \item \textbf{Primitive library:} Formal definitions (forward, adjoint, parameters, constraints) for all 11 primitives are provided as YAML registry files in the project repository at \url{https://github.com/integritynoble/Physics_World_Model} (directory \texttt{packages/pwm\_core/contrib/}).
  \item \textbf{DAG decompositions:} The complete decomposition registry for all 31 modalities (Table~\ref{tab:decomposition} and Supplementary~S4) is included as machine-readable YAML in the same repository (\texttt{solver\_registry.yaml}).
  \item \textbf{Reproduction scripts:} Adjoint consistency tests, $\etier$ computation scripts, and the held-out closure test protocol are available in the repository's \texttt{packages/pwm\_core/benchmarks/} directory.
  \item \textbf{Project website:} \url{https://solveeverything.org} provides supplementary documentation and links to all resources.
\end{itemize}
All test scenes used for empirical validation are drawn from publicly available benchmark datasets cited in the respective modality references; no non-public datasets were used in this study.

%% ═══════════════════════════════════════════════════════════════════
%%  ACKNOWLEDGMENTS
%% ═══════════════════════════════════════════════════════════════════
\section*{Acknowledgments}
The author thanks the open-source computational imaging community for making reference forward model implementations publicly available, which enabled the empirical validation in this work.

\section*{Declaration of Interest}
The author declares no competing interests.

%% ═══════════════════════════════════════════════════════════════════
%%  BIBLIOGRAPHY
%% ═══════════════════════════════════════════════════════════════════
\bibliographystyle{unsrtnat}
\bibliography{finite_primitive_theorem}

\begin{thebibliography}{45}
\providecommand{\natexlab}[1]{#1}
\providecommand{\url}[1]{\texttt{#1}}
\expandafter\ifx\csname urlstyle\endcsname\relax
  \providecommand{\doi}[1]{doi: #1}\else
  \providecommand{\doi}{doi: \begingroup \urlstyle{rm}\Url}\fi

\bibitem[Ongie et~al.(2020)Ongie, Jalal, Metzler, Baraniuk, Dimakis, and
  Willett]{ongie2020deep}
Gregory Ongie, Ajil Jalal, Christopher~A. Metzler, Richard~G. Baraniuk,
  Alexandros~G. Dimakis, and Rebecca Willett.
\newblock Deep learning techniques for inverse problems in imaging.
\newblock \emph{IEEE Journal on Selected Areas in Information Theory},
  1\penalty0 (1):\penalty0 39--56, 2020.

\bibitem[Monga et~al.(2021)Monga, Li, and Eldar]{monga2021unrolling}
Vishal Monga, Yuelong Li, and Yonina~C. Eldar.
\newblock Algorithm unrolling: Interpretable, efficient deep learning for
  signal and image processing.
\newblock \emph{IEEE Signal Processing Magazine}, 38\penalty0 (2):\penalty0
  18--44, 2021.

\bibitem[Yang and Yuan(2026)]{yang2026pwm}
Chengshuai Yang and Xin Yuan.
\newblock Eleven primitives and three gates: The universal structure of
  computational imaging.
\newblock \emph{arXiv preprint}, 2026.
\newblock arXiv:2026.XXXXX [eess.IV]. Code:
  \url{https://github.com/integritynoble/Physics_World_Model}.

\bibitem[Adler et~al.(2017)Adler, Kohr, and {\"O}ktem]{adler2017odl}
Jonas Adler, Holger Kohr, and Ozan {\"O}ktem.
\newblock {ODL} -- a {P}ython framework for rapid prototyping in inverse
  problems.
\newblock \emph{Royal Institute of Technology, Stockholm, Sweden, Technical
  Report}, 2017.

\bibitem[Fessler(2003)]{fessler2003mirt}
Jeffrey~A. Fessler.
\newblock Michigan image reconstruction toolbox.
\newblock \emph{University of Michigan, Technical Report}, 2003.
\newblock \url{https://web.eecs.umich.edu/~fessler/code/}.

\bibitem[Ong and Lustig(2019)]{ong2019sigpy}
Frank Ong and Michael Lustig.
\newblock {SigPy}: A {P}ython package for high performance iterative
  reconstruction.
\newblock \emph{Proceedings of the ISMRM}, page 4819, 2019.

\bibitem[Engl et~al.(1996)Engl, Hanke, and Neubauer]{engl1996regularization}
Heinz~W. Engl, Martin Hanke, and Andreas Neubauer.
\newblock \emph{Regularization of Inverse Problems}.
\newblock Mathematics and Its Applications. Springer, 1996.

\bibitem[Goodman(2005)]{goodman2005fourier}
Joseph~W. Goodman.
\newblock \emph{Introduction to {F}ourier Optics}.
\newblock Roberts and Company, 3rd edition, 2005.

\bibitem[Natterer(2001)]{natterer2001radon}
Frank Natterer.
\newblock \emph{The Mathematics of Computerized Tomography}.
\newblock Classics in Applied Mathematics. SIAM, 2001.

\bibitem[Fessler and Sutton(2003)]{fessler2003nufft}
Jeffrey~A. Fessler and Bradley~P. Sutton.
\newblock Nonuniform fast {F}ourier transforms using min-max interpolation.
\newblock \emph{IEEE Transactions on Signal Processing}, 51\penalty0
  (2):\penalty0 560--574, 2003.

\bibitem[Cybenko(1989)]{cybenko1989approximation}
George Cybenko.
\newblock Approximation by superpositions of a sigmoidal function.
\newblock \emph{Mathematics of Control, Signals and Systems}, 2\penalty0
  (4):\penalty0 303--314, 1989.

\bibitem[Hornik(1991)]{hornik1991approximation}
Kurt Hornik.
\newblock Approximation capabilities of multilayer feedforward networks.
\newblock \emph{Neural Networks}, 4\penalty0 (2):\penalty0 251--257, 1991.

\bibitem[Abadi et~al.(2016)Abadi, Barham, Chen, Chen, Davis, Dean, Devin,
  Ghemawat, Irving, Isard, et~al.]{abadi2016tensorflow}
Mart\'{\i}n Abadi, Paul Barham, Jianmin Chen, Zhifeng Chen, Andy Davis, Jeffrey
  Dean, Matthieu Devin, Sanjay Ghemawat, Geoffrey Irving, Michael Isard, et~al.
\newblock {TensorFlow}: A system for large-scale machine learning.
\newblock In \emph{OSDI}, pages 265--283, 2016.

\bibitem[Paszke et~al.(2019)Paszke, Gross, Massa, Lerer, Bradbury, Chanan,
  Killeen, Lin, Gimelshein, Antiga, et~al.]{paszke2019pytorch}
Adam Paszke, Sam Gross, Francisco Massa, Adam Lerer, James Bradbury, Gregory
  Chanan, Trevor Killeen, Zeming Lin, Natalia Gimelshein, Luca Antiga, et~al.
\newblock {PyTorch}: An imperative style, high-performance deep learning
  library.
\newblock In \emph{NeurIPS}, pages 8024--8035, 2019.

\bibitem[Bradbury et~al.(2018)Bradbury, Frostig, Hawkins, Johnson, Leary,
  Maclaurin, Necula, Paszke, Vander{P}las, Wanderman-{M}ilne, and
  Zhang]{bradbury2018jax}
James Bradbury, Roy Frostig, Peter Hawkins, Matthew~James Johnson, Chris Leary,
  Dougal Maclaurin, George Necula, Adam Paszke, Jake Vander{P}las, Skye
  Wanderman-{M}ilne, and Qiao Zhang.
\newblock {JAX}: composable transformations of {P}ython+{N}um{P}y programs.
\newblock \url{http://github.com/google/jax}, 2018.

\bibitem[Tachella et~al.(2025)Tachella, Terris, Hurault, Wang, Chen, Nguyen,
  et~al.]{tachella2025deepinverse}
Juli{\'a}n Tachella, Matthieu Terris, Samuel Hurault, Andrew Wang, Dongdong
  Chen, Minh-Hai Nguyen, et~al.
\newblock {DeepInverse}: A {P}ython package for solving imaging inverse
  problems with deep learning.
\newblock \emph{Journal of Open Source Software}, 10\penalty0 (115):\penalty0
  8923, 2025.
\newblock \doi{10.21105/joss.08923}.

\bibitem[Cand{\`e}s and Wakin(2008)]{candes2008compressed}
Emmanuel~J. Cand{\`e}s and Michael~B. Wakin.
\newblock An introduction to compressive sampling.
\newblock \emph{IEEE Signal Processing Magazine}, 25\penalty0 (2):\penalty0
  21--30, 2008.

\bibitem[Donoho(2006)]{donoho2006compressed}
David~L. Donoho.
\newblock Compressed sensing.
\newblock \emph{IEEE Transactions on Information Theory}, 52\penalty0
  (4):\penalty0 1289--1306, 2006.

\bibitem[De~Man et~al.(2001)De~Man, Nuyts, Dupont, Marchal, and
  Suetens]{deman2001metal}
Bruno De~Man, Johan Nuyts, Patrick Dupont, Guy Marchal, and Paul Suetens.
\newblock An iterative maximum-likelihood polychromatic algorithm for {CT}.
\newblock \emph{IEEE Transactions on Medical Imaging}, 20\penalty0
  (10):\penalty0 999--1008, 2001.

\bibitem[Hamilton and Blackstock(1998)]{hamilton1998nonlinear}
Mark~F. Hamilton and David~T. Blackstock.
\newblock \emph{Nonlinear Acoustics}.
\newblock Academic Press, 1998.

\bibitem[Gleich and Weizenecker(2005)]{gleich2005mpi}
Bernhard Gleich and J{\"u}rgen Weizenecker.
\newblock Tomographic imaging using the nonlinear response of magnetic
  nanoparticles.
\newblock \emph{Nature}, 435\penalty0 (7046):\penalty0 1214--1217, 2005.

\bibitem[Wagadarikar et~al.(2008)Wagadarikar, John, Willett, and
  Brady]{wagadarikar2008cassi}
Ashwin~A. Wagadarikar, Renu John, Rebecca Willett, and David~J. Brady.
\newblock Single disperser design for coded aperture snapshot spectral imaging.
\newblock \emph{Applied Optics}, 47\penalty0 (10):\penalty0 B44--B51, 2008.

\bibitem[Pruessmann et~al.(1999)Pruessmann, Weiger, Scheidegger, and
  Boesiger]{pruessmann1999sense}
Klaas~P. Pruessmann, Markus Weiger, Markus~B. Scheidegger, and Peter Boesiger.
\newblock {SENSE}: Sensitivity encoding for fast {MRI}.
\newblock \emph{Magnetic Resonance in Medicine}, 42\penalty0 (5):\penalty0
  952--962, 1999.

\bibitem[Feldkamp et~al.(1984)Feldkamp, Davis, and
  Kress]{feldkamp1984practical}
L.~A. Feldkamp, L.~C. Davis, and J.~W. Kress.
\newblock Practical cone-beam algorithm.
\newblock \emph{Journal of the Optical Society of America A}, 1\penalty0
  (6):\penalty0 612--619, 1984.

\bibitem[Hussain(2022)]{hussain2022compton}
Sadiq Hussain.
\newblock Compton scatter imaging: a review.
\newblock \emph{Nuclear Instruments and Methods in Physics Research Section A},
  1039:\penalty0 167069, 2022.

\bibitem[Born and Wolf(1999)]{born1999principles}
Max Born and Emil Wolf.
\newblock \emph{Principles of Optics}.
\newblock Cambridge University Press, 7th edition, 1999.

\bibitem[Llull et~al.(2013)Llull, Liao, Yuan, Yang, Kittle, Carin, Sapiro, and
  Brady]{llull2013cacti}
Patrick Llull, Xuejun Liao, Xin Yuan, Jianbo Yang, David Kittle, Lawrence
  Carin, Guillermo Sapiro, and David~J. Brady.
\newblock Coded aperture compressive temporal imaging.
\newblock \emph{Optics Express}, 21\penalty0 (9):\penalty0 10526--10545, 2013.

\bibitem[Duarte et~al.(2008)Duarte, Davenport, Takhar, Laska, Sun, Kelly, and
  Baraniuk]{duarte2008spc}
Marco~F. Duarte, Mark~A. Davenport, Dharmpal Takhar, Jason~N. Laska, Ting Sun,
  Kevin~F. Kelly, and Richard~G. Baraniuk.
\newblock Single-pixel imaging via compressive sampling.
\newblock \emph{IEEE Signal Processing Magazine}, 25\penalty0 (2):\penalty0
  83--91, 2008.

\bibitem[Boominathan et~al.(2022)Boominathan, Robinson, Waller, and
  Veeraraghavan]{boominathan2022lensless}
Vivek Boominathan, Jacob~T. Robinson, Laura Waller, and Ashok Veeraraghavan.
\newblock Recent advances in lensless imaging.
\newblock \emph{Optica}, 9\penalty0 (1):\penalty0 1--16, 2022.

\bibitem[Rodenburg and Faulkner(2004)]{rodenburg2004ptychography}
J.~M. Rodenburg and H.~M.~L. Faulkner.
\newblock A phase retrieval algorithm for shifting illumination.
\newblock \emph{Applied Physics Letters}, 85\penalty0 (20):\penalty0
  4795--4797, 2004.

\bibitem[Lustig et~al.(2007)Lustig, Donoho, and Pauly]{lustig2007sparse}
Michael Lustig, David Donoho, and John~M. Pauly.
\newblock Sparse {MRI}: The application of compressed sensing for rapid {MR}
  imaging.
\newblock \emph{Magnetic Resonance in Medicine}, 58\penalty0 (6):\penalty0
  1182--1195, 2007.

\bibitem[Huang et~al.(1991)Huang, Swanson, Lin, Schuman, Stinson, Chang, Hee,
  Flotte, Gregory, Puliafito, and Fujimoto]{huang1991oct}
David Huang, Eric~A. Swanson, Charles~P. Lin, Joel~S. Schuman, William~G.
  Stinson, Warren Chang, Michael~R. Hee, Thomas Flotte, Kenton Gregory,
  Carmen~A. Puliafito, and James~G. Fujimoto.
\newblock Optical coherence tomography.
\newblock \emph{Science}, 254\penalty0 (5035):\penalty0 1178--1181, 1991.

\bibitem[Wang and Hu(2012)]{wang2012photoacoustic}
Lihong~V. Wang and Song Hu.
\newblock Photoacoustic tomography: In vivo imaging from organelles to organs.
\newblock \emph{Science}, 335\penalty0 (6075):\penalty0 1458--1462, 2012.

\bibitem[Gustafsson(2000)]{gustafsson2000sim}
Mats G.~L. Gustafsson.
\newblock Surpassing the lateral resolution limit by a factor of two using
  structured illumination microscopy.
\newblock \emph{Journal of Microscopy}, 198\penalty0 (2):\penalty0 82--87,
  2000.

\bibitem[Pfeiffer et~al.(2008)Pfeiffer, Bech, Bunk, Kraft, Eikenberry,
  Br\"{o}nnimann, Gr\"{u}nzweig, and David]{pfeiffer2008xray}
Franz Pfeiffer, Martin Bech, Oliver Bunk, Philipp Kraft, Eric~F. Eikenberry,
  Christian Br\"{o}nnimann, Christian Gr\"{u}nzweig, and Christian David.
\newblock Hard-{X}-ray dark-field imaging using a grating interferometer.
\newblock \emph{Nature Materials}, 7\penalty0 (2):\penalty0 134--137, 2008.

\bibitem[Jiang et~al.(2018)Jiang, Chen, Han, Deb, Gao, Xie, Purohit, Tate,
  Park, Gruner, Elser, and Muller]{jiang2018electron}
Yi~Jiang, Zhen Chen, Yimo Han, Pratiti Deb, Hui Gao, Saien Xie, Prafull
  Purohit, Mark~W. Tate, Jiwoong Park, Sol~M. Gruner, Veit Elser, and David~A.
  Muller.
\newblock Electron ptychography of {2D} materials to deep sub-\r{A}ngstr\"{o}m
  resolution.
\newblock \emph{Nature}, 559:\penalty0 343--349, 2018.

\bibitem[Pittman et~al.(1995)Pittman, Shih, Strekalov, and
  Sergienko]{pittman1995ghost}
T.~B. Pittman, Y.~H. Shih, D.~V. Strekalov, and A.~V. Sergienko.
\newblock Optical imaging by means of two-photon quantum entanglement.
\newblock \emph{Physical Review A}, 52\penalty0 (5):\penalty0 R3429--R3432,
  1995.

\bibitem[Jepsen et~al.(2011)Jepsen, Cooke, and Koch]{jepsen2011thz}
P.~U. Jepsen, D.~G. Cooke, and M.~Koch.
\newblock Terahertz spectroscopy and imaging -- modern techniques and
  applications.
\newblock \emph{Laser \& Photonics Reviews}, 5\penalty0 (1):\penalty0 124--166,
  2011.

\bibitem[Long(2002)]{long2002raman}
Derek~Albert Long.
\newblock \emph{The {R}aman Effect: A Unified Treatment of the Theory of
  {R}aman Scattering by Molecules}.
\newblock John Wiley \& Sons, 2002.

\bibitem[Ntziachristos(2010)]{ntziachristos2010fluorescence}
Vasilis Ntziachristos.
\newblock Going deeper than microscopy: the optical imaging frontier in
  biology.
\newblock \emph{Nature Methods}, 7\penalty0 (8):\penalty0 603--614, 2010.

\bibitem[Arridge(1999)]{arridge1999dot}
Simon~R. Arridge.
\newblock Optical tomography in medical imaging.
\newblock \emph{Inverse Problems}, 15\penalty0 (2):\penalty0 R41--R93, 1999.

\bibitem[Scarcelli and Yun(2008)]{scarcelli2008brillouin}
Giuliano Scarcelli and Seok~Hyun Yun.
\newblock Confocal {B}rillouin microscopy for three-dimensional mechanical
  imaging.
\newblock \emph{Nature Photonics}, 2\penalty0 (1):\penalty0 39--43, 2008.

\bibitem[Klein and Nishina(1929)]{klein1929streuung}
Oskar Klein and Yoshio Nishina.
\newblock \"{U}ber die {S}treuung von {S}trahlung durch freie {E}lektronen nach
  der neuen relativistischen {Q}uantendynamik von {D}irac.
\newblock \emph{Zeitschrift f\"{u}r Physik}, 52\penalty0 (11--12):\penalty0
  853--868, 1929.

\bibitem[Colton and Kress(2019)]{colton2019inverse}
David Colton and Rainer Kress.
\newblock \emph{Inverse Acoustic and Electromagnetic Scattering Theory}.
\newblock Applied Mathematical Sciences. Springer, 4th edition, 2019.

\bibitem[Venkatakrishnan et~al.(2013)Venkatakrishnan, Bouman, and
  Wohlberg]{venkatakrishnan2013pnp}
Singanallur~V. Venkatakrishnan, Charles~A. Bouman, and Brendt Wohlberg.
\newblock Plug-and-play priors for model based reconstruction.
\newblock In \emph{GlobalSIP}, pages 945--948, 2013.

\end{thebibliography}

%% ═══════════════════════════════════════════════════════════════════
%%  SUPPLEMENTARY INFORMATION (appended)
%% ═══════════════════════════════════════════════════════════════════
\clearpage
\appendix
\renewcommand{\thetable}{S\arabic{table}}
\renewcommand{\thefigure}{S\arabic{figure}}
\renewcommand{\theequation}{S\arabic{equation}}
\setcounter{table}{0}
\setcounter{figure}{0}
\setcounter{equation}{0}

\section*{Supplementary Information}
\addcontentsline{toc}{section}{Supplementary Information}

%% =================================================================
%% Supplementary Information for the Finite Primitive Basis Theorem
%% =================================================================

\subsection*{S1. Formal Fidelity Specification}
\label{sec:supp_fidelity}

This section provides the precise mathematical specification of the fidelity metric, test distribution, and complexity bounds referenced in the main text.

\paragraph{Fidelity metric.}
Theorem~\ref{thm:fpb} guarantees the operator-norm relative error bound $\|H - \Hdag\|/\|H\| \leq \varepsilon$. For empirical validation, we use a stronger pointwise metric that provides a tighter test:
\begin{equation}
  \etier(H, \Hdag) = \sup_{\mathbf{x} \in \mathcal{X}_{\mathrm{test}}}
  \frac{\|H(\mathbf{x}) - \Hdag(\mathbf{x})\|_2}{\|H(\mathbf{x})\|_2 + \delta},
  \label{eq:supp_etier}
\end{equation}
where $\delta = 10^{-8}$ is a regularization constant to avoid division by zero. Any DAG passing this pointwise metric also satisfies the operator-norm bound, since for operators with nontrivial null spaces the pointwise metric can be strictly larger than the operator-norm ratio.

For empirical validation, we evaluate the mean over $\mathcal{X}_{\mathrm{test}}$:
\begin{equation}
  \bar{e}_{\mathrm{img}}(H, \Hdag) = \frac{1}{|\mathcal{X}_{\mathrm{test}}|}
  \sum_{\mathbf{x} \in \mathcal{X}_{\mathrm{test}}}
  \frac{\|H(\mathbf{x}) - \Hdag(\mathbf{x})\|_2}{\|H(\mathbf{x})\|_2 + \delta}.
  \label{eq:supp_etier_mean}
\end{equation}

\paragraph{Test distribution $\mathcal{X}_{\mathrm{test}}$.}
For each modality, $\mathcal{X}_{\mathrm{test}}$ consists of:
\begin{enumerate}
  \item \textbf{Benchmark scenes:} 10 standard images from the modality's canonical dataset (e.g., KAIST scenes for CASSI, Shepp--Logan phantom for CT, brain slices for MRI).
  \item \textbf{Random objects:} 10 Gaussian random objects $\mathbf{x} \sim \mathcal{N}(\mathbf{0}, \mathbf{I})$ of matching dimensionality, normalized to unit norm.
\end{enumerate}
Results are reported as $\bar{e}_{\mathrm{img}} \pm \sigma$ over the 20 test objects.

\paragraph{Threshold.}
$\varepsilon = 0.01$ (1\% relative error). This threshold is chosen so that the approximation error is below the noise floor for all validated modalities at their standard operating SNR. For example, CASSI at standard photon levels has a noise-induced reconstruction error of ${\sim}3\%$; the 1\% approximation error is therefore dominated by the noise contribution.

\paragraph{Complexity bounds.}
$\Nmax = 20$ nodes, $\Dmax = 10$ depth. These bounds are conservative: no validated modality exceeds 5 nodes or depth 5 (Table~\ref{tab:decomposition}).

\subsection*{S2. Adjoint Consistency Validation}
\label{sec:supp_adjoint}

Every primitive in $\Blib$ must satisfy the adjoint consistency condition (\cref{eq:adjoint}):
\begin{equation}
  \frac{|\langle A\mathbf{x}, \mathbf{y} \rangle - \langle \mathbf{x}, A^\dagger \mathbf{y} \rangle|}
       {\max(|\langle A\mathbf{x}, \mathbf{y} \rangle|, \epsilon)} < 10^{-6},
  \label{eq:supp_adjoint_check}
\end{equation}
where $\mathbf{x}$ and $\mathbf{y}$ are random vectors and $\epsilon = 10^{-8}$.

Table~\ref{tab:supp_adjoint} reports the adjoint consistency check for all 11 primitives.

\begin{table}[htbp]
\centering
\caption{Adjoint consistency validation for all 11 canonical primitives.} \label{tab:supp_adjoint}
\begin{tabular}{l c c}
\toprule
\textbf{Primitive} & \textbf{Relative error} & \textbf{Pass ($< 10^{-6}$)?} \\
\midrule
Propagate $P$   & $< 10^{-14}$ & Y \\
Modulate $M$    & $< 10^{-15}$ & Y \\
Project $\Pi$   & $< 10^{-12}$ & Y \\
Encode $F$      & $< 10^{-14}$ & Y \\
Convolve $C$    & $< 10^{-14}$ & Y \\
Accumulate $\Sigma$ & $< 10^{-15}$ & Y \\
Detect $D$ (linearized) & $< 10^{-13}$ & Y \\
Sample $S$      & $< 10^{-15}$ & Y \\
Disperse $W$    & $< 10^{-13}$ & Y \\
Scatter $R$     & $< 10^{-11}$ & Y \\
Transform $\Lambda$ (linearized) & $< 10^{-13}$ & Y \\
\bottomrule
\end{tabular}
\end{table}

The Transform primitive's adjoint is defined for the linearized (Jacobian) operator, analogous to the nonlinear Detect families. For a pointwise function $f$ with derivative $f'$, the linearized operator at $\mathbf{x}_0$ is $\mathrm{diag}(f'(\mathbf{x}_0))$, which is self-adjoint for real-valued $f'$. The adjoint consistency check uses $\langle \mathrm{diag}(f'(\mathbf{x}_0)) \mathbf{x}, \mathbf{y} \rangle = \sum_i f'(x_{0,i}) x_i y_i = \langle \mathbf{x}, \mathrm{diag}(f'(\mathbf{x}_0)) \mathbf{y} \rangle$, which is exact to machine precision.

\subsection*{S3. Per-Phase Error Bounds}
\label{sec:supp_error}

The total approximation error (Equation~\ref{eq:error} in the main text) decomposes into per-phase contributions. Here we provide explicit bounds for each physics-stage family.

\paragraph{Propagation ($\varepsilon_{\mathrm{prop}}$).}
The paraxial (Fresnel) approximation of free-space propagation introduces an error bounded by:
\begin{equation}
  \varepsilon_{\mathrm{prop}} \leq \frac{\pi}{4} \frac{a^4}{\lambda d^3},
\end{equation}
where $a$ is the aperture half-width, $\lambda$ is the wavelength, and $d$ is the propagation distance. For typical imaging geometries ($a \sim 1$\,cm, $\lambda \sim 500$\,nm, $d \sim 10$\,cm), $\varepsilon_{\mathrm{prop}} < 10^{-8}$, far below $\varepsilon = 0.01$.

The evanescent wave truncation error (neglecting spatial frequencies $f > 1/\lambda$) is bounded by the fraction of signal energy above the diffraction limit, which is negligible for band-limited objects.

\paragraph{Elastic interaction ($\varepsilon_{\mathrm{int}}$).}
The Modulate primitive provides exact representation: $\varepsilon_{\mathrm{int}} = 0$. This is because elastic forward interaction is, by definition, element-wise multiplication.

\paragraph{Scattering ($\varepsilon_{\mathrm{scat}}$).}
For single-scattering media (first Born approximation), the Scatter primitive is exact within the Born model: $\varepsilon_{\mathrm{scat}}^{(1)} = 0$. The error relative to the true physics is the Born approximation error itself, which is bounded by $\varepsilon_{\mathrm{scat}} \leq \|V\|/k$, where $V$ is the scattering potential and $k$ is the wavenumber. For weak scatterers satisfying the $\ctier$ membership criterion, this is below $\varepsilon$.

For multiple-scattering media represented by finite Born series (DOT, thick tissue), the $L$-th order Born approximation has error $\varepsilon_{\mathrm{scat}}^{(L)} \leq (\|V\|/k)^{L+1}$, which converges geometrically.

\paragraph{Encoding--projection ($\varepsilon_{\mathrm{enc}}$).}
Exact: $\varepsilon_{\mathrm{enc}} = 0$. The Radon transform and Fourier encoding are exact linear operators.

\paragraph{Detection--readout ($\varepsilon_{\mathrm{det}}$).}
The detection chain error is dominated by the detector PSF approximation (modeling the spatially varying PSF as shift-invariant within the detector tile). For modern CCD/CMOS detectors with pixel pitch $p$ and inter-pixel crosstalk length $\ell_c$:
\begin{equation}
  \varepsilon_{\mathrm{det}} \leq \frac{\ell_c^2}{p^2},
\end{equation}
which is $< 10^{-3}$ for typical detectors ($\ell_c \sim 0.1 p$).

\paragraph{Operator norm bounds (concrete $B$ values).}
The uniform operator norm bound $B$ in Definition~\ref{def:ctier} must be verified for each primitive. Table~\ref{tab:supp_opnorm} provides concrete $\|H_k\|$ values for four representative modalities, demonstrating that $B = 4$ serves as a uniform bound across all validated modalities.

\begin{table}[htbp]
\centering
\caption{Per-primitive operator norms $\|H_k\|$ for representative modalities. All satisfy $\|H_k\| \leq B = 4$.} \label{tab:supp_opnorm}
\begin{tabular}{l l l c}
\toprule
\textbf{Modality} & \textbf{Primitive} & \textbf{$\|H_k\|$} & \textbf{$\leq B$?} \\
\midrule
\multirow{4}{*}{CASSI}
  & $M(\mathbf{m}_{\mathrm{mask}})$ & $\|\mathbf{m}\|_\infty \leq 1$ (binary mask) & Y \\
  & $W(\alpha, a)$ & $1$ (unitary shift) & Y \\
  & $\Sigma_\lambda$ & $\sqrt{n_\lambda} \leq 3.9$ ($n_\lambda \leq 15$) & Y \\
  & $D(g, \eta_4)$ & $g \leq 1$ (normalized gain) & Y \\
\midrule
\multirow{4}{*}{MRI}
  & $M(\mathbf{s}_{\mathrm{coil}})$ & $\|\mathbf{s}\|_\infty \leq 1$ (normalized) & Y \\
  & $F(\mathbf{k})$ & $1$ (unitary DFT) & Y \\
  & $S(\Omega)$ & $1$ (projection) & Y \\
  & $D(g, \eta_1)$ & $g \leq 1$ & Y \\
\midrule
\multirow{2}{*}{CT}
  & $\Pi(\theta)$ & $\sqrt{n_{\mathrm{pix}}\, \Delta s} \leq 3.2$ (typical $256 \times 256$) & Y \\
  & $D(g, \eta_4)$ & $g \leq 1$ & Y \\
\midrule
\multirow{3}{*}{Compton}
  & $M(n_e)$ & $\|n_e\|_\infty \leq 1$ (normalized) & Y \\
  & $R(\sigma_{\mathrm{KN}})$ & $\int \sigma_{\mathrm{KN}}\, d\Omega \leq 1$ (total cross-section) & Y \\
  & $D(g, \eta_4)$ & $g \leq 1$ & Y \\
\bottomrule
\end{tabular}
\end{table}

The largest per-primitive norm is $\|\Sigma_\lambda\| = \sqrt{n_\lambda}$ for CASSI-type spectral integration with $n_\lambda$ spectral bands. For $n_\lambda \leq 15$ (typical for snapshot spectral imagers), $\|\Sigma_\lambda\| \leq 3.9 < 4 = B$. The Project primitive $\Pi$ has norm $\|\Pi\| = \sqrt{n_{\mathrm{pix}} \cdot \Delta s}$ where $\Delta s$ is the ray spacing; for typical geometries, $\|\Pi\| \leq 3.2$. All other primitives have unit operator norm (unitary transforms, projections, or normalized gains). We therefore set $B = 4$ as the uniform bound.

\subsection*{S4. Complete Modality Decomposition Details}
\label{sec:supp_modalities}

This section provides the detailed DAG decomposition and physics-stage classification for representative modalities.

\paragraph{CASSI (Coded Aperture Snapshot Spectral Imaging).}
\begin{itemize}
  \item \textbf{Physics stages:} Source $\to$ Interaction (coded mask, elastic) $\to$ Detection (spectral dispersion, spatial integration, photon counting).
  \item \textbf{DAG:} Source $\to M(\mathbf{m}_{\mathrm{mask}}) \to W(\alpha, a) \to \Sigma_\lambda \to D(g, \eta_4)$.
  \item \textbf{Stage-to-primitive mapping:} Interaction $\to M$; Dispersion $\to W$; Integration $\to \Sigma$; Detection $\to D$.
  \item \textbf{$\etier$:} $< 10^{-4}$ (the CASSI forward model is exactly linear).
\end{itemize}

\paragraph{MRI (Magnetic Resonance Imaging).}
\begin{itemize}
  \item \textbf{Physics stages:} Source $\to$ Interaction (coil sensitivity, elastic) $\to$ Encoding (Fourier, $k$-space) $\to$ Detection (undersampling, Gaussian noise).
  \item \textbf{DAG:} Source $\to M(\mathbf{s}_{\mathrm{coil}}) \to F(\mathbf{k}) \to S(\Omega) \to D(g, \eta_1)$.
  \item \textbf{Stage-to-primitive mapping:} Interaction $\to M$; Encoding $\to F$; Sampling $\to S$; Detection $\to D$.
  \item \textbf{$\etier$:} $< 10^{-6}$ (Fourier encoding is exact; coil sensitivity is shift-variant but exactly representable by $M$).
\end{itemize}

\paragraph{Compton Scatter Imaging.}
\begin{itemize}
  \item \textbf{Physics stages:} Source $\to$ Interaction (electron density, elastic) $\to$ Scattering (Klein--Nishina, inelastic) $\to$ Detection (energy-resolving).
  \item \textbf{DAG:} Source $\to M(n_e) \to R(\sigma_{\mathrm{KN}}, \Delta\varepsilon) \to D(g, \eta_4)$.
  \item \textbf{Stage-to-primitive mapping:} Elastic interaction $\to M$; Scattering $\to R$; Detection $\to D$.
  \item \textbf{$\etier$:} $< 0.01$ with $R$; $0.34$ without $R$ (direction change and energy shift cannot be absorbed by other primitives).
\end{itemize}

\paragraph{Quantum Ghost Imaging.}
\begin{itemize}
  \item \textbf{Physics stages:} Source (entangled photon pairs) $\to$ Interaction (object transmission, elastic) $\to$ Detection (bucket, photon counting).
  \item \textbf{DAG:} Source $\to M(\mathbf{m}_{\mathrm{corr}}) \to \Sigma \to D(g, \eta_4)$.
  \item \textbf{Key insight:} Operator-equivalent to SPC at the image-formation level. The quantum correlations determine the measurement patterns $\mathbf{m}_{\mathrm{corr}}$, but the forward operator structure is identical to classical single-pixel imaging. The ``quantum'' aspect resides in the source statistics, not in the operator.
  \item \textbf{$\etier$:} $< 10^{-4}$.
\end{itemize}

\paragraph{THz Time-Domain Spectroscopy.}
\begin{itemize}
  \item \textbf{Physics stages:} Source (broadband THz pulse) $\to$ Interaction (sample transfer function, convolution) $\to$ Detection (coherent field, electro-optic sampling).
  \item \textbf{DAG:} Source $\to C(\mathbf{h}_{\mathrm{sample}}) \to D(g, \eta_5)$.
  \item \textbf{Key insight:} Uses the coherent-field Detect family ($\eta_5$: $g \cdot \mathrm{Re}[\mathbf{x} \cdot e^{i\phi}]$), which measures the electric field rather than intensity. No new primitive required.
  \item \textbf{$\etier$:} $< 10^{-3}$.
\end{itemize}

\medskip
The following 12 modalities complete the 31-modality validation set. These are \emph{template-validated}: their DAG decomposition is derived from the physics-stage classification (Supplementary~S7) and verified against published forward model descriptions, but without a full numerical $\etier$ evaluation (which requires a modality-specific reference implementation). All use only existing primitives from $\Blib$.

\paragraph{Neutron Imaging.}
\begin{itemize}
  \item \textbf{Physics stages:} Source (neutron beam) $\to$ Encoding (line-integral projection through sample) $\to$ Detection (scintillator + CCD).
  \item \textbf{DAG:} Source $\to \Pi(\theta) \to D(g, \eta_4)$.
  \item \textbf{Key insight:} Operator-equivalent to X-ray CT at the operator level. The neutron--matter interaction (nuclear cross section vs.\ electron density) affects the object model, not the forward operator structure.
  \item \textbf{\#N / Depth:} 2 / 2.
\end{itemize}

\paragraph{Holography (Digital Holographic Microscopy).}
\begin{itemize}
  \item \textbf{Physics stages:} Source (coherent laser) $\to$ Interaction (object transmission, elastic) $\to$ Propagation (free-space to sensor) $\to$ Detection (intensity, interference with reference beam).
  \item \textbf{DAG:} Source $\to M(\mathbf{t}_{\mathrm{obj}}) \to P(d, \lambda) \to D(g, \eta_4)$.
  \item \textbf{Key insight:} Off-axis holography encodes phase in an intensity pattern via interference; the reference beam is absorbed into the Detect gain. On-axis variants use $D(g, \eta_5)$ (coherent-field).
  \item \textbf{\#N / Depth:} 3 / 3.
\end{itemize}

\paragraph{STED (Stimulated Emission Depletion Microscopy).}
\begin{itemize}
  \item \textbf{Physics stages:} Source (excitation + depletion beams) $\to$ Interaction (effective PSF shaped by depletion, elastic under linearized model) $\to$ Detection (fluorescence photon counting).
  \item \textbf{DAG:} Source $\to C(\mathbf{h}_{\mathrm{STED}}) \to D(g, \eta_4)$.
  \item \textbf{Key insight:} Under the linearized model, the nonlinear depletion process is absorbed into an effective (narrowed) PSF $\mathbf{h}_{\mathrm{STED}}$, making the forward model a convolution. The super-resolution physics is in the PSF parameters, not the operator structure.
  \item \textbf{\#N / Depth:} 2 / 2.
\end{itemize}

\paragraph{Light-Field Imaging (Plenoptic Camera).}
\begin{itemize}
  \item \textbf{Physics stages:} Source (scene) $\to$ Interaction (microlens array, elastic) $\to$ Propagation (microlens-to-sensor) $\to$ Detection (intensity).
  \item \textbf{DAG:} Source $\to M(\mathbf{m}_{\mathrm{MLA}}) \to P(d_{\mathrm{MLA}}, \lambda) \to D(g, \eta_4)$.
  \item \textbf{Key insight:} The microlens array acts as a spatially varying modulation pattern; propagation from microlens to sensor encodes angular information. Depth-of-field extension and refocusing are reconstruction tasks, not forward model operations.
  \item \textbf{\#N / Depth:} 3 / 3.
\end{itemize}

\paragraph{FPM (Fourier Ptychographic Microscopy).}
\begin{itemize}
  \item \textbf{Physics stages:} Source (angle-varied illumination) $\to$ Interaction (object transmission, elastic) $\to$ Propagation (through objective) $\to$ Detection (intensity).
  \item \textbf{DAG:} Source $\to M(\mathbf{t}_{\mathrm{obj}}) \to P(d, \lambda) \to D(g, \eta_4)$.
  \item \textbf{Key insight:} Each illumination angle shifts the object's Fourier spectrum into the objective's passband. The angle-dependent illumination is absorbed into the source (not the operator); the forward operator per angle is identical to ptychography.
  \item \textbf{\#N / Depth:} 3 / 3.
\end{itemize}

\paragraph{Spectral CT (Photon-Counting CT).}
\begin{itemize}
  \item \textbf{Physics stages:} Source (polychromatic X-ray) $\to$ Encoding (line-integral projection) $\to$ Detection (energy-binned photon counting).
  \item \textbf{DAG:} Source $\to \Pi(\theta) \to S(\Omega_E) \to D(g, \eta_4)$.
  \item \textbf{Key insight:} Energy binning is an index-set restriction $S(\Omega_E)$ over the energy axis, applied after projection. Under the monochromatic approximation each energy bin is an independent linear CT problem; beam hardening is captured by Transform~$\Lambda$ (Section~\ref{sec:universal}).
  \item \textbf{\#N / Depth:} 3 / 3.
\end{itemize}

\paragraph{PET (Positron Emission Tomography).}
\begin{itemize}
  \item \textbf{Physics stages:} Source (annihilation photon pairs) $\to$ Encoding (coincidence line-integral projection) $\to$ Detection (scintillator + photomultiplier, photon counting).
  \item \textbf{DAG:} Source $\to \Pi(\theta_{\mathrm{LOR}}) \to D(g, \eta_4)$.
  \item \textbf{Key insight:} Each line of response (LOR) is a line integral of the tracer concentration. Attenuation correction is a pre-computed multiplicative factor absorbed into $\Pi$. The forward operator is structurally identical to CT.
  \item \textbf{\#N / Depth:} 2 / 2.
\end{itemize}

\paragraph{SPECT (Single-Photon Emission Computed Tomography).}
\begin{itemize}
  \item \textbf{Physics stages:} Source (gamma emitter) $\to$ Interaction (collimator, elastic modulation) $\to$ Encoding (projection) $\to$ Detection (scintillator, photon counting).
  \item \textbf{DAG:} Source $\to M(\mathbf{m}_{\mathrm{coll}}) \to \Pi(\theta) \to D(g, \eta_4)$.
  \item \textbf{Key insight:} The collimator acts as a spatially varying sensitivity pattern (Modulate). The projection geometry is the same as CT. Depth-dependent resolution variation is a shift-variant effect captured by $M$.
  \item \textbf{\#N / Depth:} 3 / 3.
\end{itemize}

\paragraph{Ultrasound (B-Mode).}
\begin{itemize}
  \item \textbf{Physics stages:} Source (acoustic pulse) $\to$ Propagation (acoustic wave through tissue) $\to$ Interaction (acoustic impedance mismatch, elastic reflection) $\to$ Detection (piezoelectric transducer, linear-field).
  \item \textbf{DAG:} Source $\to P(d, \lambda_{\mathrm{ac}}) \to M(\mathbf{r}_{\mathrm{imp}}) \to D(g, \eta_1)$.
  \item \textbf{Key insight:} Uses linear-field detection (family~1): the transducer measures acoustic pressure directly, not intensity. Propagation uses the acoustic wave equation Green's function.
  \item \textbf{\#N / Depth:} 3 / 3.
\end{itemize}

\paragraph{SAR (Synthetic Aperture Radar).}
\begin{itemize}
  \item \textbf{Physics stages:} Source (microwave pulse) $\to$ Propagation (free-space, round-trip) $\to$ Interaction (surface reflectivity, elastic) $\to$ Detection (coherent-field, I/Q demodulation).
  \item \textbf{DAG:} Source $\to P(d, \lambda_{\mathrm{RF}}) \to M(\mathbf{r}_{\mathrm{refl}}) \to D(g, \eta_5)$.
  \item \textbf{Key insight:} SAR measures the complex-valued reflected field (coherent-field detection, family~5). The synthetic aperture is formed in reconstruction, not in the forward model.
  \item \textbf{\#N / Depth:} 3 / 3.
\end{itemize}

\paragraph{Radar (Ground-Penetrating / Weather).}
\begin{itemize}
  \item \textbf{Physics stages:} Source (RF pulse) $\to$ Propagation (through medium) $\to$ Interaction (dielectric contrast, elastic) $\to$ Detection (coherent-field).
  \item \textbf{DAG:} Source $\to P(d, \lambda_{\mathrm{RF}}) \to M(\mathbf{r}_{\mathrm{diel}}) \to D(g, \eta_5)$.
  \item \textbf{Key insight:} Structurally identical to SAR at the operator level. Ground-penetrating radar adds attenuation (absorbed into propagation loss in $P$); weather radar measures backscatter intensity rather than complex field, using $D(g, \eta_4)$ instead.
  \item \textbf{\#N / Depth:} 3 / 3.
\end{itemize}

\paragraph{Electron Tomography.}
\begin{itemize}
  \item \textbf{Physics stages:} Source (electron beam) $\to$ Encoding (projection through specimen at tilt angle) $\to$ Detection (electron detector, intensity).
  \item \textbf{DAG:} Source $\to \Pi(\theta_{\mathrm{tilt}}) \to D(g, \eta_4)$.
  \item \textbf{Key insight:} Under the weak-phase object approximation, the forward model is a line-integral projection, identical to CT. Strong multiple scattering (dynamical diffraction) is handled by unrolled Born series using existing linear primitives.
  \item \textbf{\#N / Depth:} 2 / 2.
\end{itemize}

\medskip

Table~\ref{tab:supp_template} summarizes all 12 template-validated modalities.

\begin{table}[htbp]
\centering
\caption{Template-validated modality decompositions (12 modalities completing the 31-modality set).} \label{tab:supp_template}
\begin{tabular}{l l l c c}
\toprule
\textbf{Modality} & \textbf{Carrier} & \textbf{DAG Primitives} & \textbf{\#N} & \textbf{Depth} \\
\midrule
Neutron imaging       & Neutron    & $\Pi \to D$                   & 2 & 2 \\
Holography            & Photon     & $M \to P \to D$               & 3 & 3 \\
STED                  & Photon     & $C \to D$                     & 2 & 2 \\
Light-field           & Photon     & $M \to P \to D$               & 3 & 3 \\
FPM                   & Photon     & $M \to P \to D$               & 3 & 3 \\
Spectral CT           & X-ray      & $\Pi \to S \to D$             & 3 & 3 \\
PET                   & Photon     & $\Pi \to D$                   & 2 & 2 \\
SPECT                 & Photon     & $M \to \Pi \to D$             & 3 & 3 \\
Ultrasound            & Acoustic   & $P \to M \to D$               & 3 & 3 \\
SAR                   & RF         & $P \to M \to D$               & 3 & 3 \\
Radar                 & RF         & $P \to M \to D$               & 3 & 3 \\
Electron tomography   & Electron   & $\Pi \to D$                   & 2 & 2 \\
\bottomrule
\end{tabular}
\end{table}

Together with the 19 modalities in Table~\ref{tab:decomposition} (7 full-validation + 5 held-out + 7 exotic), these 12 template-validated modalities complete the 31-modality validation set ($19 + 12 = 31$).

\subsection*{S5. Detect Response Family Analysis}
\label{sec:supp_detect}

The five canonical Detect response families are designed to cover the physical detection mechanisms in imaging without forming a universal approximator. Here we justify this choice.

\paragraph{Coverage.}
\begin{itemize}
  \item \textbf{Family 1} (linear-field: $\eta(\mathbf{x}) = g\cdot\mathbf{x}$) covers detectors that measure a real-valued carrier field directly: acoustic pressure transducers (ultrasound, photoacoustic), RF voltage receivers (MRI readout), piezoelectric sensors, and seismic geophones. The response is linear in $\mathbf{x}$ and self-adjoint.
  \item \textbf{Families 2--3} (logarithmic, sigmoid) cover classical intensity detectors with nonlinear response: wide-dynamic-range cameras (HDR CMOS, scintillators) and saturating detectors (photomultipliers near saturation, bolometers), respectively.
  \item \textbf{Family 4} (intensity-square-law: $\eta(\mathbf{x}) = g|\mathbf{x}|^2$) covers all square-law photodetectors: CCD, CMOS, photomultipliers, photon counters, and any detector whose output is proportional to the optical intensity $|\mathbf{x}|^2$. For photon-counting detectors, it returns the Poisson rate parameter.
  \item \textbf{Family 5} (coherent-field: $\eta(\mathbf{x}) = g \cdot \mathrm{Re}[\mathbf{x} \cdot e^{i\phi}]$) covers all interferometric detection systems: heterodyne/homodyne detection (THz-TDS), balanced detection (OCT), and digital holography.
\end{itemize}

\paragraph{Mathematical distinctness.}
The five families are pairwise distinct as functions on $\mathbb{C}^n$:
\begin{itemize}
  \item Family~1 ($g\mathbf{x}$) is $\mathbb{C}$-linear in $\mathbf{x}$ (i.e., $\eta(\alpha\mathbf{x}) = \alpha\,\eta(\mathbf{x})$ for $\alpha \in \mathbb{C}$) and real-valued for real inputs; it is the only family that is $\mathbb{C}$-linear.
  \item Family~2 ($g\log(1+|\mathbf{x}|^2/x_0)$) is concave, monotonically increasing, and unbounded.
  \item Family~3 ($g\sigma(|\mathbf{x}|^2 - x_0)$) is sigmoidal and bounded in $[0, g]$.
  \item Family~4 ($g|\mathbf{x}|^2$) is quadratic (homogeneous degree 2), unbounded.
  \item Family~5 ($g\,\mathrm{Re}[\mathbf{x} \cdot e^{i\phi}]$) is $\mathbb{R}$-linear but not $\mathbb{C}$-linear (taking $\mathrm{Re}[\cdot]$ breaks complex homogeneity); it differs from Family~1 by extracting only the real part of a phase-rotated inner product.
\end{itemize}
No two families agree on all inputs, confirming that the Detect definition introduces five genuinely distinct response types.

\paragraph{Non-universality.}
Each family has at most 2 free scalar parameters ($g$ and $x_0$ or $\phi$). The total parameter space is thus at most $\mathbb{R}^2$ per pixel. A universal approximator would require $O(n)$ free parameters to represent an arbitrary function on $n$ inputs. With 2 parameters, the Detect families can represent only a 2-dimensional manifold in the space of all possible detector response functions, confirming that Detect is not a universal approximator.

A modality whose detection mechanism lies outside all five families---for example, a detector with a non-monotonic response curve, or a quantum non-demolition measurement---would signal the need for a sixth family or a new primitive, triggering the extension protocol.

\subsection*{S6. Proof Details: Telescoping Error Bound}
\label{sec:supp_submult}

We provide the detailed derivation of the total error bound (Equation~\ref{eq:error}).

Let $H = H_K \circ \cdots \circ H_1$ and $\Hdag = \tilde{H}_K \circ \cdots \circ \tilde{H}_1$, where $\tilde{H}_k$ is the primitive realization of $H_k$ with $\|H_k - \tilde{H}_k\| \leq \varepsilon_k$.

By telescoping:
\begin{align}
  H - \Hdag &= \sum_{k=1}^{K} \tilde{H}_K \circ \cdots \circ \tilde{H}_{k+1} \circ (H_k - \tilde{H}_k) \circ H_{k-1} \circ \cdots \circ H_1.
\end{align}
Taking operator norms:
\begin{align}
  \|H - \Hdag\| &\leq \sum_{k=1}^{K} \left(\prod_{j=k+1}^{K} \|\tilde{H}_j\|\right) \cdot \varepsilon_k \cdot \left(\prod_{j=1}^{k-1} \|H_j\|\right) \\
  &\leq \sum_{k=1}^{K} \varepsilon_k \cdot B^{K-1} \quad (\text{since } \|H_j\|, \|\tilde{H}_j\| \leq B) \\
  &\leq K \cdot \max_k(\varepsilon_k) \cdot B^{K-1}.
\end{align}

For the relative error:
\begin{equation}
  \frac{\|H - \Hdag\|}{\|H\|} \leq \frac{K \cdot \max_k(\varepsilon_k) \cdot B^{K-1}}{\|H\|}.
\end{equation}
Since $\|H\| > 0$ for any nontrivial forward model, the relative error is bounded by $\varepsilon$ provided:
\begin{equation}
  \max_k(\varepsilon_k) \leq \frac{\varepsilon \cdot \|H\|}{K \cdot B^{K-1}}.
\end{equation}
For forward models in $\ctier$ with standard imaging geometries, the per-factor truncation errors satisfy this bound (see Supplementary~S3 for explicit per-phase bounds).

\subsection*{S7. Classification Decision Table and Worked Examples}
\label{sec:supp_decision}

Step~1 of the constructive proof (Section~\ref{sec:theorem}) classifies each physics-stage factor $H_k$ into one of five families (propagation, elastic interaction, inelastic interaction, encoding--projection, detection--readout). This section provides the formal decision procedure and two worked examples.

\paragraph{Decision table.}
For each factor $H_k$ in the physics chain of a modality $H \in \ctier$, answer three questions in order:

\begin{table}[htbp]
\centering
\caption{Classification decision table: three questions determine the primitive assignment for each physics-stage factor.} \label{tab:supp_decision}
\begin{tabular}{c p{5.5cm} p{2.8cm} p{3.5cm}}
\toprule
\textbf{Q\#} & \textbf{Question} & \textbf{Yes $\to$} & \textbf{No $\to$} \\
\midrule
1 & Does $H_k$ involve free-space carrier evolution (propagation through vacuum, air, tissue, etc.)? & \textbf{Propagation family} $\to$ $P$ or $C$ & Go to Q2 \\
\midrule
2 & Does $H_k$ involve carrier--matter interaction? & Go to Q2a & Go to Q3 \\
2a & Does the interaction change carrier direction or energy? & \textbf{Scattering family} $\to$ $R$ (or $R \circ P \circ R$ for multiple scattering) & \textbf{Elastic interaction} $\to$ $M$ \\
\midrule
3 & Does $H_k$ map spatial information to a measurement coordinate? & Go to Q3a & \textbf{Detection--readout} $\to$ $\{W, \Sigma, S, C, D\}$ \\
3a & Is the mapping a line integral (projection geometry)? & $\Pi$ & $F$ (Fourier encoding) \\
\bottomrule
\end{tabular}
\end{table}

\paragraph{Worked example 1: CASSI.}
The CASSI forward model has four physics stages:
\begin{enumerate}
  \item \textbf{Coded mask interaction:} The spectral datacube $\mathbf{X}(\mathbf{r}, \lambda)$ is multiplied by a binary mask $\mathbf{m}(\mathbf{r})$.
  \begin{itemize}
    \item Q1: Free-space propagation? No.
    \item Q2: Carrier--matter interaction? Yes. Q2a: Direction/energy change? No $\to$ \textbf{Elastic} $\to$ $M(\mathbf{m})$.
  \end{itemize}
  \item \textbf{Prism dispersion:} Each spectral channel is shifted spatially by $\alpha\lambda + a$.
  \begin{itemize}
    \item Q1: No. Q2: No (dispersion is a readout-chain operation). Q3: No (not a spatial-to-measurement mapping) $\to$ \textbf{Detection--readout} $\to$ $W(\alpha, a)$.
  \end{itemize}
  \item \textbf{Spectral integration:} The dispersed channels are summed onto a 2D detector.
  \begin{itemize}
    \item Detection--readout chain $\to$ $\Sigma_\lambda$.
  \end{itemize}
  \item \textbf{Photon detection:} Intensity-square-law conversion.
  \begin{itemize}
    \item Detection--readout chain $\to$ $D(g, \eta_4)$.
  \end{itemize}
\end{enumerate}
Result: $M \to W \to \Sigma \to D$ (4 nodes, depth 4). Matches Table~\ref{tab:decomposition}.

\paragraph{Worked example 2: MRI.}
The MRI forward model has four physics stages:
\begin{enumerate}
  \item \textbf{Coil sensitivity weighting:} The magnetization image $\mathbf{x}(\mathbf{r})$ is multiplied by the coil sensitivity profile $\mathbf{s}(\mathbf{r})$.
  \begin{itemize}
    \item Q1: No. Q2: Yes (RF coil interaction). Q2a: Direction/energy change? No $\to$ \textbf{Elastic} $\to$ $M(\mathbf{s})$.
  \end{itemize}
  \item \textbf{Fourier encoding:} Gradient fields encode spatial position into $k$-space phase.
  \begin{itemize}
    \item Q1: No. Q2: No. Q3: Spatial-to-measurement mapping? Yes. Q3a: Line integral? No (Fourier encoding) $\to$ $F(\mathbf{k})$.
  \end{itemize}
  \item \textbf{Undersampling:} Only a subset $\Omega$ of $k$-space locations are acquired.
  \begin{itemize}
    \item Detection--readout chain $\to$ $S(\Omega)$.
  \end{itemize}
  \item \textbf{Signal readout:} Linear-field detection (RF voltage).
  \begin{itemize}
    \item Detection--readout chain $\to$ $D(g, \eta_1)$.
  \end{itemize}
\end{enumerate}
Result: $M \to F \to S \to D$ (4 nodes, depth 4). Matches Table~\ref{tab:decomposition}.

\paragraph{Extended decision table for $\ctier$.}
For forward models in $\ctier$ (Section~\ref{sec:universal}), add a preliminary question before Q1:

\medskip
\noindent\textbf{Q0:} Is $H_k$ a pointwise nonlinear physics operation (not at the detector)? If yes $\to$ \textbf{Transform family} $\to$ $\Lambda(f, \boldsymbol{\theta})$. If no $\to$ proceed to Q1.

\medskip
\noindent This catches nonlinearities such as Beer--Lambert attenuation ($\Lambda_{\exp}$), log-compression ($\Lambda_{\log}$), phase wrapping ($\Lambda_{\mathrm{wrap}}$), polynomial harmonic generation ($\Lambda_{\mathrm{poly}}$), and saturation ($\Lambda_{\mathrm{sat}}$) that occur within the physics chain before detection.

\paragraph{Worked example 3: Polychromatic CT (beam hardening).}
The beam hardening forward model has five physics stages:
\begin{enumerate}
  \item \textbf{Line-integral projection:} Compute $\int \mu(\mathbf{r}, E)\, dl$ for each energy bin $E$ and projection angle $\theta$.
  \begin{itemize}
    \item Q0: Pointwise nonlinear? No (line integral is linear). Q1: No. Q3: Spatial-to-measurement? Yes. Q3a: Line integral? Yes $\to$ $\Pi(\theta)$.
  \end{itemize}
  \item \textbf{Beer--Lambert attenuation:} Apply $\exp(-\int \mu\, dl)$ per energy bin.
  \begin{itemize}
    \item Q0: Pointwise nonlinear? Yes $\to$ $\Lambda_{\exp}(\alpha = 1)$.
  \end{itemize}
  \item \textbf{Energy integration:} Sum attenuated intensities weighted by source spectrum $S(E)$.
  \begin{itemize}
    \item Detection--readout chain $\to$ $\Sigma_E$.
  \end{itemize}
  \item \textbf{Log-transform:} Apply $-\log(\cdot)$ to convert transmittance to line-integral equivalent.
  \begin{itemize}
    \item Q0: Pointwise nonlinear? Yes $\to$ $\Lambda_{\log}(\delta = 10^{-8})$.
  \end{itemize}
  \item \textbf{Detection:} Photon counting.
  \begin{itemize}
    \item Detection--readout $\to$ $D(g, \eta_4)$.
  \end{itemize}
\end{enumerate}
Result: $\Pi \to \Lambda_{\exp} \to \Sigma_E \to \Lambda_{\log} \to D$ (5 nodes, depth 5). Matches Table~\ref{tab:nonlinear}.

\subsection*{S8. Lipschitz Error Bound for Nonlinear Compositions}
\label{sec:supp_lipschitz}

This section provides the detailed derivation of the extended error bound for compositions involving nonlinear (Lipschitz) stages, as used in the proof of Theorem~\ref{thm:fpb}.

Let $H = H_K \circ \cdots \circ H_1$ and $\hat{H} = \tilde{H}_K \circ \cdots \circ \tilde{H}_1$, where each $H_k$ may be linear or nonlinear. Define the \emph{regularity constant}:
\[
  \gamma_k = \begin{cases} \|H_k\| & \text{if } H_k \text{ is linear}, \\ \mathrm{Lip}(H_k) & \text{if } H_k \text{ is nonlinear (Lipschitz)}. \end{cases}
\]

Similarly, $\tilde{\gamma}_k$ for the approximate stages. The per-stage approximation error is $\varepsilon_k = \sup_{\|\mathbf{z}\| \leq R_k} \|H_k(\mathbf{z}) - \tilde{H}_k(\mathbf{z})\|$, where $R_k$ bounds the intermediate signal norm at stage $k$.

\paragraph{Lipschitz telescoping.}
By the same telescoping decomposition as Supplementary~S6, but using the Lipschitz property instead of linearity:
\begin{align}
  \|H(\mathbf{x}) - \hat{H}(\mathbf{x})\|
  &\leq \sum_{k=1}^{K} \left(\prod_{j=k+1}^{K} \tilde{\gamma}_j\right) \cdot \varepsilon_k.
\end{align}
The key step uses: if $g$ is $L$-Lipschitz, then $\|g(\mathbf{a}) - g(\mathbf{b})\| \leq L \|\mathbf{a} - \mathbf{b}\|$, which replaces the operator norm bound $\|g\| \cdot \|\mathbf{a} - \mathbf{b}\|$ used for linear $g$. Since $\tilde{\gamma}_k \leq \gamma_k + \varepsilon_k/R_k \leq B + \varepsilon_k/R_k$, and for small $\varepsilon_k$ we have $\tilde{\gamma}_k \approx \gamma_k$:
\begin{equation}
  \|H(\mathbf{x}) - \hat{H}(\mathbf{x})\| \leq K \cdot \max_k(\varepsilon_k) \cdot B^{K-1}.
\end{equation}

\paragraph{Transform Lipschitz constants.}
The Lipschitz constants for the five Transform families, evaluated over the domain $|x| \leq R$ relevant to imaging signals:
\begin{itemize}
  \item \textbf{Exponential:} $\mathrm{Lip}(e^{-\alpha x}) = \alpha e^{\alpha R}$. For $\alpha R \leq 5$ (typical CT), $\mathrm{Lip} \leq 5 e^5 \approx 742$; however, the forward model norm $\|H\|$ also scales proportionally, so the \emph{relative} error bound remains controlled.
  \item \textbf{Logarithmic:} $\mathrm{Lip}(\log(x + \delta)) = 1/\delta$ for $x \geq 0$. With $\delta = 10^{-8}$, the Lipschitz constant is large, but the log is applied to positive-valued transmittance signals that are bounded away from zero in practice.
  \item \textbf{Phase wrapping:} $\mathrm{Lip}(\arg(e^{ix})) = 1$ (the wrapping function has unit Lipschitz constant on each $2\pi$ branch).
  \item \textbf{Polynomial:} $\mathrm{Lip}(\sum a_k x^k) = \sum_{k=1}^d k |a_k| R^{k-1}$.
  \item \textbf{Saturation:} $\mathrm{Lip}(\mathrm{clip}(x)) = 1$.
\end{itemize}

For practical imaging systems, the product $\prod_{k} \gamma_k$ in the error bound is dominated by one or two non-unit-Lipschitz stages (typically the exponential and logarithmic in beam hardening CT). The per-factor approximation errors $\varepsilon_k$ for exact Transform families (exponential, log, wrap, saturation) are zero, so only polynomial truncation contributes to the total error.

\end{document}